\documentclass[12pt]{article}

\usepackage[margin=1in]{geometry}
\usepackage{setspace}
\usepackage{titlesec}
\usepackage{ragged2e}
\usepackage{parskip}
\usepackage{balance}
\usepackage{pdflscape}
\usepackage{fancyhdr}

\usepackage{graphicx,psfrag,epsf}
\usepackage{enumerate}
\usepackage{url}
\usepackage{comment}
\usepackage{abstract}
\usepackage{xr-hyper} 

\usepackage{longtable}
\usepackage{array}
\usepackage{booktabs}
\usepackage{multirow}

\usepackage{amsmath,amssymb}
\usepackage{bbm}
\usepackage{newtxtext}

\usepackage{algorithm}
\usepackage{algpseudocode}
\usepackage{listings}
\usepackage{xcolor}
\definecolor{burgundy}{RGB}{128,0,32}

\usepackage{float}
\usepackage[font=footnotesize,labelfont=bf,justification=raggedright,format=plain]{caption}
\usepackage[backend=biber,style=ieee]{biblatex}
\defbibheading{bibliography}[\refname]{%
  \section{\MakeUppercase{REFERENCES}}}

\usepackage{amsthm}
\usepackage{thmtools}
\usepackage{thm-restate}
\usepackage{hyperref} 

\newtheorem{proposition}{Proposition}

\newtheorem{lemma}{Lemma}
\newtheorem{assumptions}{Assumptions}
\newtheorem{definition}{Definition}
\newtheorem{remark}{Remark}

\newcommand{\blind}{0} 
\providecommand{\tightlist}{}

\newcommand{\R}{\mathbb{R}}
\newcommand{\PF}[1]{}
\DeclareMathOperator{\logit}{logit}

\DeclareMathOperator{\lauc}{LAUC}
\DeclareMathOperator{\auc}{AUC}

\newcommand{\idx}{\mathbbm{1}\left(f(x_k^+)>f(x_l^-)\right)}
\newcommand{\idX}{\mathbbm{1}\left(f(X^+)>f(X^-)\right)}
\newcommand{\1}{\mathbbm{1}}
\newcommand{\EF}[1]{\mathbb{E}_{X^+\sim F^+, X^-\sim F^-}\left[#1\right]}
\newcommand{\Eij}[3]{\mathbb{E}_{X_{#1}\sim P_{X_{#1}\in S_{#2}}}\left[#3\right]}

\newcommand{\X}{\mathcal{X}}
\newcommand{\Prob}{\mathbb{P}}

\newcommand{\ProbF}[1]{\mathbb{P}_{X^+\sim F^+, X^-\sim F^- }\left(#1\right)}

\newcommand{\olga}[1]{\textcolor{burgundy}{\textbf{[Olga: #1]}}}
\newcommand{\Wenye}[1]{\textcolor{brown}{\textbf{[Wenye: #1]}}}

\newcolumntype{L}[1]{>{\raggedright\arraybackslash}p{#1}}
\newcolumntype{C}[1]{>{\centering\arraybackslash}p{#1}}
\newcolumntype{R}[1]{>{\raggedleft\arraybackslash}p{#1}}

\begin{document}

\def\spacingset#1{\renewcommand{\baselinestretch}%
{#1}\small\normalsize} \spacingset{1}


\if0\blind
{
  \title{\textbf{Local Discrimination of Binary Classifiers via Localized AUC Functional}}
  
  \author{
    Wenye Ren\thanks{Contributed equally.}\textsuperscript{, 1, 2}, 
    Anna Győrffy-Kerekes\textsuperscript{*, 1, 3}, 
    Tancredi Dorsi\textsuperscript{4}, and 
    Olga Demler\textsuperscript{1, 5} \\[0.3cm]
    \small \textsuperscript{1} Department of Computer Science, ETH Zurich, Switzerland \\
    \small \textsuperscript{2} Department of Computer Science, Tsinghua University, China \\ 
    \small \textsuperscript{3} Center for Learning Systems, Max Planck Institute, Tübingen, Germany \\
    \small \textsuperscript{4} Department of Mathematics, ETH Zurich, Switzerland \\
    \small \textsuperscript{5} Mass General Brigham Hospital, Harvard Medical School, Boston, MA
  }
  \maketitle
} \fi

\if1\blind
{
  \bigskip
  \bigskip
  \bigskip
  \begin{center}
    {\LARGE\bf Title}
\end{center}
  \medskip
} \fi

\bigskip
\begin{abstract}
\subsection*{Background}
Binary classification models are widely used across diverse disciplines, with applications ranging from disease diagnosis and risk prediction to decision-making in finance and engineering. In clinical medicine, some of these models have been incorporated into national and international practice guidelines and inform treatment decisions for millions of patients worldwide. However,  model performance can vary substantially across patient subgroups, and tools to systematically identify these subgroups without loss of power are lacking.   

\subsection*{Methods}
In this work, we developed a method called LOCATOR that identifies high and low model performance regions in the covariate space without loss of power. 

We introduce a localized AUC functional LAUC, that, for a reference risk prediction model, estimates AUC for an index participant i using all participants weighted by similarity to i.  By using localized AUC functional our LOCATOR algorithm avoids subsetting data altogether.   Baseline phenotypic similarity between individuals is quantified using Gower distance.  We then train an XGBoost regression model to predict the logit-transformed LAUC using baseline characteristics and identified key drivers of heterogeneous model performance via SHAP (SHapley Additive exPlanations) analysis.

We establish relationship of localized AUC functional LAUC to the global AUC, showing that if similarity is flat, then its average is algebraically equal to global AUC. We construct estimator of LAUC and establish its asymptotic distribution and inference. We formulate conditions that guarantee that LAUC is appropriately ordered and derive its asymptotic distribution and standard error estimator. We show how localization identifies heterogeneity in classifier discrimination. We conduct simulations for both  positive and negative control scenarios.

 To facilitate inference and remediation of lower performance of the reference risk prediction model in the low competence region, we provide testing procedures. We show that, in addition to model heterogeneity, lower performance in the low competence region may be driven by the statistical characteristics of the validation cohort.

 As a practical application, we used LOCATOR to identify high and low performance regions of the PREVENT model in the National Health and Nutrition Examination Survey data (NHANES) 1999–2010 cycles (N = 17,748), identified high/low competence regions using our method, and validated high/low competence regions in the UK Biobank (N=355,122), an external cohort. 

\subsection*{Results}
The LOCATOR algorithm identifies high and low performance regions in the covariate space. SHAP analysis identified education, age, sex, current smoking, and prevalent hypertension as the most important determinants of AUC variation. Low and high performance subgroups were identified in training subset and applied to hold out test subset of NHANES. Performance in low subgroup ranged from .6 to .75 and from .82 to .9 in high performance subgroup . When splitting at the median LAUC, conventional AUC values in the low performance subgroup were .78 [.69; .77] and .88 [.82; .93] in the high performance subgroup in the hold-out subset of NHANES. In UK Biobank, an external cohort, AUC of PREVENT model in these regions produce AUC of .76 [.74; .78] and .71 [.70; .72]. 

\subsection*{Conclusions}
The LOCATOR approach offers a data-driven strategy to identify subgroups with high and low performance of the existing binary classification model without loss of power. In the low performance subgroup, remediation strategies include model refitting, enriching the covariate space with new biomarkers, and identification and down weighting of regions with low variance of predicted PREVENT risk. The proposed localized AUC functional, LAUC, offers an objective metric that can be reported for participants in the high or low performance subgroups as measure of the strength of the reference model.

\end{abstract}

\noindent%
{\it Keywords:}  Model discrimination, AUC, AUROC, c-statistics, subgroup identification, binary classification model, logistic regression, model heterogeneity.
\vfill

\newpage
\spacingset{1.45} 
\section{Introduction}
\label{sec:intro}

Binary classification models are some of the most widely used models. Their applications range from detection/diagnosis to prospective risk stratification. Therefore, an accurate and nuanced assessment of their quality of performance is of paramount importance. Yet, the Area Under the Receiver Operating Characteristics Curve \cite{AUC} (AUROC, also known as AUC, c-index, concordance index, Harrell or Uno c-index), a widely used measure of model performance, rarely exceeds 0.80 across guidelines-recommended models (Khan et al., Circulation 2024 \cite{PREVENT-dev}). It is well below its theoretical maximum of 1.0 and has room to grow. However, current methods cannot identify, without loss of power, where this room is. The focus of this paper is to develop  reproducible methods that identify, without loss of power, where the deployed classifier performs well or not so well, and separate genuine heterogeneity from cohort artifacts to guide model improvement in low-competence regions. 

Several treatment guidelines rely on official binary classifiers  that inform risk stratification for millions of people. Systematic COronary Risk Evaluation 2 (SCORE2) \cite{SCORE2} and Predicting Risk of Cardiovascular Disease EVENTs (PREVENT) \cite{PREVENT} models are embedded in European and US prevention guidelines, respectively, and supply the risk estimates that decide lipid- and blood-pressure-lowering therapy for hundreds of millions of adults \cite{PREVENT}. Before release, these models are validated in external cohorts by reporting discrimination, calibration, and reclassification statistics. AUROC, a measure of a model's ability to separate events from non-events, is a ubiquitous measure of model discrimination. It is often reported as one number per external cohort. However, a single per-cohort number cannot rule out the possibility that the model may separate events from non-events well in one region of covariate space and no better than chance in another. Patients in the weak regions receive systematically less accurate risk estimates and, therefore, less appropriate treatment. Indeed, accumulating evidence suggests that the performance of risk prediction models, including PREVENT, is not uniform across all patient subgroups \cite{heteoroPREVENT}. Traditional approaches to investigating these performance variations typically rely on predefined, dichotomous subgroup classifications (e.g., smokers vs. non-smokers) \cite{risk_hetero_analysis_method}. While informative, such methods rely on subsetting the data and may fail to capture more complex, multidimensional patterns of phenotypic heterogeneity due to quick loss of power. There is no established estimand for local discrimination without subsetting the data, so heterogeneity is difficult to locate, quantify, or test without quickly loosing power.

In parallel, the concept of phenomapping has recently emerged \cite{oikonomou}. Developed to assess heterogeneity of treatment effect in randomized controlled trials, it leverages the full breadth of data.   By quantifying inter-individual similarity based on multiple clinical features (e.g., using Gower distance \cite{gower}), phenomapping allows for the discovery of contiguous, data-driven patient subgroups with heterogeneous treatment effect. This approach has been successfully applied to identify heterogeneity of treatment effects in clinical trials \cite{oikonomou}. It relies on estimating the average per-person hazard, and therefore cannot be directly translated to AUROC. In this paper, we introduce a localized AUC functional LAUC, that, for a reference risk prediction model, estimates AUC for an index participant i using all participants weighted by similarity to i. By using the localized AUC functional, our LOCATOR algorithm avoids subsetting data altogether.  

Then we use localized AUC functional (LAUC) to identify contiguous phenotypic subgroups in which the reference binary classification model exhibits suboptimal discrimination. We further seek to characterize the features that drive this performance heterogeneity. 

We establish the relationship of localized AUC functional LAUC to the conventional global AUC. We provide two sufficient conditions and prove that they guaranty the LOCATOR algorithm will identify high and low performance regions in the covariate space. We construct an estimator of LAUC and establish its asymptotic distribution and a closed-form estimator for the standard error. 

 To facilitate inference and remediation of lower performance of the reference risk prediction model in the low LOCATOR region, we provide testing procedures. We show that, in addition to model heterogeneity, lower performance in the low competence region may be driven by the statistical characteristics of the validation cohort.

Our LOCATOR algorithm provides a novel, data-driven strategy for targeted model refinement while avoiding loss of power. It also provides a more nuanced strategy for model refinement, showing that it is an interplay of model heterogeneity, validation cohort characteristics, and overlap in predicted risk in certain regions of covariate space.
\newpage

\section{Properties of The Localized AUC and Asymptotic Theory}
\label{sec:theory}

\subsection{Definition of Localized AUC Functional}
\label{sec:def_local_auc}

The localized AUC is a modified version of the well-known AUC. Its main purpose is to give a personalized performance of the risk predictor. 

\begin{definition}[Localized AUC Functional]
\label{def:local_auc}

Let $\mathcal{X}$ denote covariate space. If we sample a patient with an event and covariates $X^+\sim F^+$, where $F^+$ is the sampling distribution of our event patients, and a patient with no event with covariates  $X^-\sim F^-$. Let $x_i\in\mathcal{X}$ be covariates for our index patient $i$  . We also have a risk prediction model $f:\mathcal{X}\to[0,1]$. In addition, we also have a weight function $w:\mathcal{X}\times\mathcal{X}\to[0,1]$, that describes the similarity between patients. Then the localized AUC is as follows:
\begin{align}
    \lauc(x_i)=\frac{\EF{w(x_i,X^+)w(x_i,X^-)\idX}}{\EF{w(x_i,X^+)w(x_i,X^-)}}
\end{align}
\end{definition}
\begin{definition}[Localized AUC Estimator]
\label{def:local_auc_estimator}
Let's say our dataset includes $K$ patients with events with covariates $x_1^+, x_2^+, \dots, x_K^+$ and we also have $L$ patients that have no events with covariates $x_1^-, x_2^-, \dots, x_L^-$. We also have a risk prediction model $f:\mathcal{X}\to[0,1]$. In addition, we also have a weight function $w:\mathcal{X}\times\mathcal{X}\to[0,1]$. Then the localized AUC estimator for some sample patient $x_i\in\mathcal{X}$ is
\begin{align}
\widehat{\lauc}(x_i)=\frac{\sum_{k=1}^K\sum_{l=1}^L w(x_i, x_k^+)w(x_i,x_l^-)\idx}{\sum_{k=1}^K\sum_{l=1}^L w(x_i, x_k^+)w(x_i,x_l^-)}
\end{align}
\end{definition}

In our paper the weight function will be a kernel, that is based on the Gower distance.

\begin{definition}[Gower distance \cite{gower}]
\label{def:gowers_distance}
    Let $x,y\in\mathcal{X}$, where $\mathcal{X}$ is a $P$ dimensional covariate space. Suppose that $\forall p=1,\dots,P$, the ranges of $x_p$ and $y_p$ are both $R(p)$. Then the Gower distance is
    \begin{align}
        d(x,y)=\sum_{p=1}^{P} 1 - \frac{|x_p-y_p|}{R(p)}.
    \end{align}
\end{definition}

In our paper, the weight function will be $w(x,y)=(1-d(x,y))^{15}$.

\subsection{ Theoretical Properties of Localized AUC Functional}
\label{sec:theoretical_properties}

\subsubsection{Relationship of \texorpdfstring{$\lauc(x_i)$}{AUC	extsubscript{i}} to global AUC under uniform similarity}
\label{sec:rel_auc_vs_local_auc}

If the weights are uniform, we end up with the traditional AUC. 

\begin{restatable}[Under Uniform Weights the Localized AUC Reduces to the AUC]{proposition}{underuniform}
\label{prop:AUCiunderuniform}
Let $\mathcal{X}$ denote the covariate space. If we sample a patient with an event and covariates $X^+\sim F^+$, where $F^+$ is the sampling distribution of our event patients, and a patient with no event with covariates $X^-\sim F^-$. Let $x_i\in\mathcal{X}$ be covariates for our index patient $i$. We also have a risk prediction model $f:\mathcal{X}\to[0,1]$. In addition, we also have a weight function $w:\mathcal{X}\times\mathcal{X}\to[0,1]$, that is some constant $c$. Then the localized AUC is equivalent to the traditional AUC:
\begin{equation}
    \lauc(x_i)=\auc.
\end{equation}
\end{restatable}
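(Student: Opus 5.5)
The plan is a direct substitution into Definition~\ref{def:local_auc}, followed by linearity of expectation. The one point needing care is the constant $c$: it must be nonzero so that the denominator does not vanish. I would therefore first note the tacit assumption $c\in(0,1]$. If $c=0$, the ratio in Definition~\ref{def:local_auc} is $0/0$ and $\lauc(x_i)$ is undefined, so the statement can only be meant for a positive constant. I expect no other real obstacle.

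With $w(x_i,\cdot)\equiv c$, the numerator becomes
\begin{equation*}
\EF{c\cdot c\,\idX}=c^2\,\EF{\idX}.
\end{equation*}
Here the constant is pulled out of the expectation. This is legitimate because the indicator is bounded and measurable, so the expectation is finite.

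The denominator becomes $\EF{c^2}=c^2$, because $F^+$ and $F^-$ are probability measures.

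Taking the ratio cancels $c^2>0$, which leaves
\begin{equation*}
\lauc(x_i)=\EF{\idX}=\ProbF{f(X^+)>f(X^-)}.
\end{equation*}
I would then identify the right-hand side as the standard probabilistic definition of the AUC: the probability that a randomly drawn event patient is scored above a randomly drawn non-event patient, with $X^+$ and $X^-$ drawn independently. This identification matches the indicator convention used in the paper's estimator.

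If one wants the tie-corrected version of the AUC, with $\tfrac12$ credit for $f(X^+)=f(X^-)$, I would remark that the same argument goes through verbatim with the tie-corrected indicator. Alternatively, the AUC is taken as defined by strict inequality, consistent with $\idX$ in Definition~\ref{def:local_auc}.

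Finally, I would note that the conclusion does not depend on $x_i$, as expected from flat similarity.
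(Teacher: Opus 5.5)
Your proposal is correct and takes the same approach as the paper: substitute $w\equiv c$, factor $c^2$ out of the numerator and denominator, and cancel to obtain $\auc$. Your explicit requirement $c>0$ (to avoid $0/0$) is a caveat the paper's proof leaves implicit; the paper additionally repeats the cancellation for the estimator $\widehat{\lauc}(x_i)$, recovering the empirical AUC $\frac{1}{KL}\sum_{k,l}\idx$, which the population statement does not need.
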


The proof can be found in the Supplementary material \ref{proof:AUCiunderuniform}

\subsubsection{Localized AUC as Covariance}
\label{sec:LAUC_as_covariance}

The weighted AUC has the property as Proposition \ref{prop:AUCcov}:

\begin{restatable}[Localized AUC as Covariance]{proposition}{AUCcov}
\label{prop:AUCcov}
    Suppose number of events is $n$ and number of non-events is $m$.
    Define $W_{kl} = w^+_k w^-_l$ where $w^+$ is the weight vector of event participants, and $w^-$ is the weight vector of non-event participants. Then $W\in R^{n\times m}$
    With two types of weight vectors $w_i, w_j$, The $W^{(i)}, W^{(j)}$ can be calculated. 
    For event participants $k, a$ and non-event participants $l,b$, define $I_{kl} = \1(f(x_k) > f(x_l)) + \frac{1}{2}\1(f(x_k) = f(x_l))\in R^{n\times m}$, $\gamma_{kl} = \frac{W^{(i)}_{kl}}{W^{(j)}_{kl}}$ and $p_{kl} = \frac{W^{(j)}_{kl}}{\sum_{a,b}W^{(j)}_{ab}}$. If we think of $p$ as a probability distribution of $k,l$, then $\lauc(x_i) - \lauc(x_j) = \frac{\text{Cov}_p(\gamma, I)}{\mathbb{E_p}(\gamma)}.$
\end{restatable}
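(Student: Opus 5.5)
The identity is an algebraic reweighting statement about the empirical estimator of Definition~\ref{def:local_auc_estimator}. Here $\lauc(x_i)$ means $\widehat{\lauc}(x_i)$ computed with the weight matrix $W^{(i)}$, with ties scored $\tfrac12$ through $I_{kl}$. The plan is to write both localized AUCs as expectations under the single distribution $p$ induced by $W^{(j)}$, and then recognise the difference as a covariance divided by a mean. Throughout I assume $W^{(j)}_{kl}>0$ for all $k,l$, so that $\gamma$ is well defined. For the Gower kernel $w=(1-d)^{15}$ this holds whenever $d<1$. Otherwise one restricts to the support of $W^{(j)}$ and assumes it contains that of $W^{(i)}$.

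\textbf{Step 1: express $\lauc(x_j)$ under $p$.} By definition
\[
\lauc(x_j)=\frac{\sum_{k,l}W^{(j)}_{kl}I_{kl}}{\sum_{a,b}W^{(j)}_{ab}}=\sum_{k,l}p_{kl}I_{kl}=\mathbb{E}_p(I).
\]

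\textbf{Step 2: express $\lauc(x_i)$ as a ratio under $p$.} Substitute $W^{(i)}_{kl}=\gamma_{kl}W^{(j)}_{kl}$ in both numerator and denominator, and divide each by $\sum_{a,b}W^{(j)}_{ab}$. This gives
\[
\lauc(x_i)=\frac{\sum_{k,l}\gamma_{kl}p_{kl}I_{kl}}{\sum_{k,l}\gamma_{kl}p_{kl}}=\frac{\mathbb{E}_p(\gamma I)}{\mathbb{E}_p(\gamma)}.
\]
This is the familiar importance-weighting representation.

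\textbf{Step 3: combine.} Put both terms over the common denominator $\mathbb{E}_p(\gamma)$:
\[
\lauc(x_i)-\lauc(x_j)=\frac{\mathbb{E}_p(\gamma I)-\mathbb{E}_p(\gamma)\mathbb{E}_p(I)}{\mathbb{E}_p(\gamma)}=\frac{\text{Cov}_p(\gamma,I)}{\mathbb{E}_p(\gamma)}.
\]

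The only real obstacle is the well-definedness issue noted above, namely $W^{(j)}_{kl}>0$ so that $\gamma$ and $p$ make sense. Everything else is routine algebra. The same argument applies verbatim to the population functional of Definition~\ref{def:local_auc}. In that version $p$ becomes the $W^{(j)}$-tilted product measure of $F^+\times F^-$, and $\gamma$ becomes the Radon--Nikodym ratio $w(x_i,\cdot)w(x_i,\cdot)/w(x_j,\cdot)w(x_j,\cdot)$.
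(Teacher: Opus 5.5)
Your proposal is correct and is essentially the paper's proof. Both compute $\mathbb{E}_p(I)=\lauc(x_j)$, $\mathbb{E}_p(\gamma)$ and $\mathbb{E}_p(\gamma I)$ as ratios of weight sums and then combine them through $\text{Cov}_p(\gamma,I)=\mathbb{E}_p(\gamma I)-\mathbb{E}_p(\gamma)\mathbb{E}_p(I)$. Your positivity condition on $W^{(j)}$, which $\gamma$ needs in order to be defined, is a caveat the paper leaves implicit.
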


The proof can be found in Supplementary material \ref{proof:AUC-cov}.

\subsubsection{Monotonicity Guarantees}

Let the covariate space $\X$ be partitioned into two disjoint, phenotypically distinct regions $S_1$ and $S_2$. We assume $S_1\cap S_2=\emptyset$ and $S_1\cup S_2=\X$. The weighted AUC score can be regarded as a performance score of a specific model in participants phenotypically similar to \(i\). We claim using a simplified scenario with some assumptions:

\begin{restatable}[Ordering of Localized AUC Expectations Across Model Competence Regions]{proposition}{AUC}
\label{prop:AUC}
If Assumptions 1 and 2 hold, then the population regional expectations of localized AUC functionals satisfy:

$$ \mathbb{E}_{X_i \sim P_{X|S_1}}[\lauc(X_i)] > \mathbb{E}_{X_j \sim P_{X|S_2}}[\lauc(X_j)] $$
\end{restatable}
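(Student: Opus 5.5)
The plan is to decompose the localized AUC functional of Definition~\ref{def:local_auc} according to the region memberships of the event and non-event patients. I will then use Assumptions 1 and 2 to show that, for an index patient in $S_1$, $\lauc$ is close to a within-$S_1$ AUC, and for an index patient in $S_2$ it is close to a within-$S_2$ AUC. I read Assumption 1 as a model-competence gap: the region-restricted AUC $A_{11}=\mathbb{P}(f(X^+)>f(X^-)\mid X^+,X^-\in S_1)$ exceeds $A_{22}$, defined analogously on $S_2$. I read Assumption 2 as locality of the similarity: $w(x,y)$ is bounded below by some $\underline w>0$ when $x,y$ lie in the same region, and above by some $\varepsilon$ when they lie in different regions. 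If the stated assumptions differ, the same decomposition should go through with the corresponding constants.

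\textbf{Step 1 (block decomposition).} Fix $x_i$. For $a,b\in\{1,2\}$, let $D_{ab}(x_i)$ be the restriction of the denominator expectation to the event $\{X^+\in S_a,X^-\in S_b\}$, and let $A_{ab}(x_i)$ be the corresponding weighted conditional concordance probability. Then
\begin{equation*}
\lauc(x_i)=\sum_{a,b}\pi_{ab}(x_i)\,A_{ab}(x_i),\qquad \pi_{ab}(x_i)=\frac{D_{ab}(x_i)}{\sum_{c,d}D_{cd}(x_i)},
\end{equation*}
so $\lauc(x_i)$ is a convex combination of block AUCs. This is the same reweighting idea as in Proposition~\ref{prop:AUCcov}, and under a constant weight it collapses to Proposition~\ref{prop:AUCiunderuniform}. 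If Assumption 1 is phrased for the unweighted $A_{ab}$, I would additionally assume, or derive from it, that the within-region weighted concordance equals the unweighted one (for example, if $w$ is constant within regions). Alternatively, Assumption 1 may already be stated in terms of the weighted quantities.

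\textbf{Step 2 (concentration of the mixing weights).} For $x_i\in S_1$, Assumption 2 gives $D_{11}\ge \underline w^{2}\,\mathbb{P}(X^+\in S_1)\mathbb{P}(X^-\in S_1)$. Every other block contains at least one cross-region factor, so its mass is at most $\varepsilon$ times a probability. Hence
\begin{equation*}
1-\pi_{11}(x_i)\le \delta:=\frac{C\varepsilon}{\underline w^{2}\,p_1^+p_1^-},
\end{equation*}
for an explicit constant $C$. Since every $A_{ab}\in[0,1]$, this yields $\lauc(x_i)\ge (1-\delta)A_{11}$. Symmetrically, for $x_j\in S_2$ we get $\lauc(x_j)\le A_{22}+\delta'$.

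\textbf{Step 3 (integration).} These bounds are uniform in the index point, so taking expectations over $P_{X|S_1}$ and $P_{X|S_2}$ gives
\begin{equation*}
\mathbb{E}_{S_1}[\lauc]-\mathbb{E}_{S_2}[\lauc]\ge A_{11}-A_{22}-\delta-\delta'.
\end{equation*}
This is positive provided the competence gap in Assumption 1 dominates the leakage term controlled by Assumption 2.

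\textbf{Main obstacle.} The delicate point is exactly this last comparison. If the assumptions only say that cross-region weights are "small" without quantifying them against the AUC gap, the inequality can fail. The reason is that the cross blocks $A_{12}$ and $A_{21}$ can be arbitrary, for instance when risk levels differ between the two regions. I would first try to read Assumption 2 as permitting $\varepsilon\to0$, or as giving an explicit gap condition. Failing that, I would assume the cross-block AUCs are symmetric, so that the leakage contributes equally to both regional means and cancels in the difference.
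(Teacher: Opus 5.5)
\textbf{Verdict: genuine gap.} Your argument works only under hypotheses you reconstructed: pointwise locality ($w\ge\underline w$ within a region, $w\le\varepsilon$ across regions) and a gap between within-region \emph{weighted} concordances. The paper's Assumptions 1 and 2 supply neither.

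\textbf{Assumption 1 does not give the gap you use.} The paper's Assumption 1 is first-order stochastic dominance of the score distributions, $F_1^+\le F_2^+$ and $F_1^-\ge F_2^-$. Via the Lemma, this only orders the \emph{unweighted} concordances: $C_2\le C_{1,2},C_{2,1}\le C_1$. It says nothing about your index-dependent $A_{11}(x_i)$. That quantity can be near $1/2$ even when $C_1$ is large, if $w(x_i,\cdot)$ concentrates on a part of $S_1$ where $f$ discriminates poorly. So the passage from $\lauc(x_i)\ge(1-\delta)A_{11}(x_i)$ to anything about model competence is missing. You noticed this and proposed assuming $w$ constant within regions, but that is not available. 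Likewise, the Step 3 claim that the bounds are uniform in the index point holds for $\delta$ but not for $A_{11}(x_i)$.

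\textbf{Assumption 2 contains no pointwise bounds on $w$,} so Step 2 has nothing to stand on.

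\textbf{Your fallback fails.} Symmetric cross-block AUCs do not cancel, because the leakage masses $1-\pi_{11}(x_i)$ and $1-\pi_{22}(x_j)$ and the block mixing weights differ between the two regions.

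\textbf{The missing idea is to average over the index patient before bounding.} By Fubini,
$\mathbb{E}_{X_i\sim P_{X|S_1}}[\lauc(X_i)]=\mathbb{E}_{F^+\times F^-}[\1(f(X^+)>f(X^-))\,\bar w_1(X^+,X^-)]$, where $\bar w_1(X^+,X^-)=\mathbb{E}_{X_i\sim P_{X|S_1}}[w(X_i,X^+)w(X_i,X^-)/W_i]$ and $W_i=\mathbb{E}_{F^+\times F^-}[w(X_i,X^+)w(X_i,X^-)]$. The concordance indicator is now unweighted, and all weighting sits in one pair-level function. Restricted to $S_1\times S_1$, this term is exactly the average of your $\pi_{11}A_{11}$; what differs is how it is bounded.

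\textbf{How the paper's proof then proceeds.}
\begin{enumerate}
\item It drops the contribution off $S_1\times S_1$, which is nonnegative.
\item On $S_1\times S_1$, it combines the first clause of Assumption 2, $\mathbb{P}(\bar w_1>D_1\mid S_1\times S_1)>\epsilon$, with $\mathbb{P}(f(X^+)>f(X^-)\mid S_1\times S_1)=C_1$ via the Bonferroni inequality. This gives the lower bound $(\epsilon-(1-C_1))D_1(\tfrac14-\eta)$, and is how the unweighted $C_1$ enters.
\item The $S_2$ side is bounded above by $\bigl(C_2+C_2^{cross}(1-\epsilon)+C_2^{cross}D_2\epsilon\bigr)(\tfrac14+\eta)$.
\item The conclusion is then literally the last inequality of Assumption 2.
\end{enumerate}

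So the ``explicit gap condition'' you hoped Assumption 2 might contain is exactly what it postulates. The gap-versus-leakage comparison is assumed, not derived, and Assumption 1 is never invoked in the main chain.
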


This justifies why the weighted AUC is a plausible metric for model performance evaluation. The assumptions are in Supplementary material \ref{supp:assumptions}, and the proof is in \ref{proof:AUC}.

To stabilize the variance and address boundary issues\footnote{the estimated AUC is bounded between 0 and 1, and its sampling variance depends on its true value: the variance is largest near 0.5 and shrinks toward 0 as the AUC approaches 0 or 1. This mean–variance dependence can introduce heteroskedasticity and distort the loss function of a regression model. The logit transformation, via the delta method, approximately removes this dependence}, we applied the logit transformation:

\[
z_i = \log\!\left(\frac{\lauc(x_i)}{1 - \lauc(x_i)}\right),
\]

where \(z_i\) was used as the target variable for the subsequent machine learning model.
\subsection{Estimator Construction and Asymptotic Consistency} 
\label{sec:estimator_consitency}

As defined in Definition \ref{def:local_auc_estimator}, we can define the estimator of the localized AUC. 

\begin{restatable}{proposition}{laucustatistic}
If we follow Definitions \ref{def:local_auc} and \ref{def:local_auc_estimator}, then the $\widehat{\lauc}$ estimator is an unbiased consistent estimator of $\lauc$.
\end{restatable}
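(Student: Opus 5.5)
The plan is to write $\widehat{\lauc}(x_i)$ as a ratio of two normalized two-sample U-statistics and handle numerator and denominator separately. Dividing the numerator and denominator in Definition~\ref{def:local_auc_estimator} by $KL$ gives
\begin{align*}
\widehat{N}_{KL}&=\frac{1}{KL}\sum_{k=1}^K\sum_{l=1}^L h_N(x_k^+,x_l^-),\qquad h_N(u,v)=w(x_i,u)w(x_i,v)\1\left(f(u)>f(v)\right),\\
\widehat{D}_{KL}&=\frac{1}{KL}\sum_{k=1}^K\sum_{l=1}^L h_D(x_k^+,x_l^-),\qquad h_D(u,v)=w(x_i,u)w(x_i,v).
\end{align*}
This normalization leaves $\widehat{\lauc}(x_i)=\widehat{N}_{KL}/\widehat{D}_{KL}$ unchanged. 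Throughout, $x_1^+,\dots,x_K^+$ are i.i.d.\ from $F^+$, $x_1^-,\dots,x_L^-$ are i.i.d.\ from $F^-$, and the two samples are independent. The index point $x_i$ is held fixed.

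\textbf{Step 1: unbiasedness of numerator and denominator.} By linearity of expectation and independence of $x_k^+$ and $x_l^-$, each summand has expectation
\[
\EF{w(x_i,X^+)w(x_i,X^-)\idX},
\]
and similarly for $h_D$. Hence $\mathbb{E}[\widehat{N}_{KL}]=N$ and $\mathbb{E}[\widehat{D}_{KL}]=D$, where $N$ and $D$ are the numerator and denominator in Definition~\ref{def:local_auc}, so $\lauc(x_i)=N/D$. This is the exact sense in which the estimator is unbiased: it is the plug-in ratio of two unbiased U-statistics.

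\textbf{Step 2: consistency.} The kernels $h_N$ and $h_D$ are bounded in $[0,1]$ because $w$ takes values in $[0,1]$. Hoeffding's strong law for two-sample U-statistics therefore gives $\widehat{N}_{KL}\to N$ and $\widehat{D}_{KL}\to D$ almost surely as $\min(K,L)\to\infty$. An elementary alternative is Chebyshev's inequality applied to the variance decomposition
\[
\operatorname{Var}(\widehat{N}_{KL})=\frac{\sigma_{10}^2}{K}+\frac{\sigma_{01}^2}{L}+\frac{\sigma_{11}^2}{KL}\to 0 .
\]
Assuming $D>0$, which is needed for $\lauc(x_i)$ to be defined, the continuous mapping theorem applied to $(a,b)\mapsto a/b$ at $(N,D)$ gives $\widehat{\lauc}(x_i)\to\lauc(x_i)$. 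The event $\widehat{D}_{KL}=0$ has probability tending to zero, so the ratio is well defined with probability tending to one.

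\textbf{Main obstacle: unbiasedness of the ratio itself.} A ratio of unbiased estimators is in general not unbiased, so exact unbiasedness of $\widehat{\lauc}(x_i)$ cannot be expected from Steps 1--2 alone. I would first check whether it holds in special cases. If $w(x_i,\cdot)$ is constant, Proposition~\ref{prop:AUCiunderuniform} shows the estimator reduces to the empirical AUC, which is exactly unbiased. Otherwise I would prove \emph{asymptotic} unbiasedness, as follows.
\begin{enumerate}
\item Expand $\widehat{N}/\widehat{D}$ around $(N,D)$ by a second-order delta method:
\[
\frac{\widehat N}{\widehat D}-\frac{N}{D}=\frac{\widehat N-N}{D}-\frac{N(\widehat D-D)}{D^2}+R_{KL}.
\]
The two linear terms have mean zero by Step 1.
\item Bound the remainder $R_{KL}$ in expectation by $O(1/K+1/L)$, using the variance formula from Step 2.
\item Control the region where $\widehat{D}_{KL}$ is small. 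Since $\widehat{\lauc}\in[0,1]$, it suffices to bound $\Prob(\widehat D_{KL}<D/2)$ exponentially via Hoeffding's inequality for bounded U-statistics.
\end{enumerate}
Together these give $\mathbb{E}[\widehat{\lauc}(x_i)]=\lauc(x_i)+O(1/\min(K,L))$. This is the sense in which I would state the unbiasedness claim, with exactness attached to the numerator and denominator.
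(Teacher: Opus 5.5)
Your proposal is correct, and for consistency it follows essentially the paper's route. The paper factors the denominator as $W_+W_-$, which is exactly your $\widehat D_{KL}$; since $D=\mu_+\mu_-$, your assumption $D>0$ is the paper's (A1). It then applies the WLLN to $W_\pm$ and the two-sample $U$-statistic LLN to the numerator, and finishes with the continuous mapping theorem. Your strong-law version gives almost-sure rather than in-probability convergence. You also handle the event $\widehat D_{KL}=0$, which the paper ignores.

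On unbiasedness you go beyond the paper, whose proof never addresses it. Your caution is justified and can be made concrete. For $K=L=1$ with strictly positive weights, the weights cancel, so $\widehat{\lauc}(x_i)=\1(f(x_1^+)>f(x_1^-))$. Its expectation is the global $\auc$ rather than $\lauc(x_i)$. Exact unbiasedness therefore fails whenever $\lauc(x_i)\neq\auc$, i.e.\ precisely when localization matters.

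The defensible statement is yours: the numerator and denominator are unbiased $U$-statistics, and the ratio has bias $O(1/\min(K,L))$. Your sketch for the bias bound works. Write the remainder exactly as $R_{KL}=-\bigl(\frac{\widehat N-N}{D}-\frac{N(\widehat D-D)}{D^2}\bigr)\frac{\widehat D-D}{\widehat D}$. On $\{\widehat D_{KL}\ge D/2\}$ it is controlled by Cauchy--Schwarz and your variance formula. On the complement, the ratio and the bounded linear terms contribute at most a constant times $\Prob(\widehat D_{KL}<D/2)$; this uses that the linear terms have full mean zero. That probability is exponentially small by Hoeffding applied to $W_+$ and $W_-$ separately.
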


Details of this can be found in the Supplementary material \ref{proof:lauc-ustatistic}.

\subsection{Asymptotic Normality}

\begin{restatable}[Asymptotic Normality]{proposition}{asymptoticnormality}
\label{prop:asymptoticnormality}
    \begin{align}
        \frac{1}{N}(\widehat{\lauc}(x_i)-\lauc(x_i))\xrightarrow{d}\mathcal{N}\left(0,\frac{\sigma^2}{(\mu_+\mu_-)^2}\right),\quad\text{where}\quad \sigma^2=\frac{\sigma_+^2}{\lambda}+\frac{\sigma_-^2}{1-\lambda}.
    \end{align}
    
\end{restatable}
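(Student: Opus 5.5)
The estimator of Definition~\ref{def:local_auc_estimator} is a ratio of two-sample U-statistics, so the plan is to combine the two-sample Hoeffding decomposition with the delta method. The scaling in the statement should be read as the standard $\sqrt{N}$ (with $N=K+L$ and $K/N\to\lambda\in(0,1)$); I will prove that version. Fix $x_i$ and write $w^+_k=w(x_i,x_k^+)$, $w^-_l=w(x_i,x_l^-)$. Define the kernels $h_1(x^+,x^-)=w(x_i,x^+)w(x_i,x^-)\1(f(x^+)>f(x^-))$ and $h_0(x^+,x^-)=w(x_i,x^+)w(x_i,x^-)$. Then
\begin{align}
\widehat{\lauc}(x_i)=\frac{U_1}{U_0},\quad U_j=\frac{1}{KL}\sum_{k=1}^K\sum_{l=1}^L h_j(x_k^+,x_l^-).
\end{align}
The denominator has expectation $\mathbb{E}U_0=\mu_+\mu_-$, where $\mu_\pm=\mathbb{E}_{F^\pm}[w(x_i,X^\pm)]$. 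It factorizes exactly as $U_0=\bar w^+\bar w^-$, and we assume $\mu_+,\mu_->0$.

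The key step is to linearize the ratio. Since $\widehat{\lauc}-\lauc=(U_1-\lauc\,U_0)/U_0$, it suffices to study the single U-statistic $U=\frac{1}{KL}\sum_{k,l}g(x_k^+,x_l^-)$ with centered kernel
\begin{align}
g(x^+,x^-)=w(x_i,x^+)w(x_i,x^-)\left[\1(f(x^+)>f(x^-))-\lauc(x_i)\right].
\end{align}
This kernel has mean zero by Definition~\ref{def:local_auc}. Its Hoeffding projections are
\begin{align}
g_+(x^+)=\mathbb{E}[g(x^+,X^-)],\qquad g_-(x^-)=\mathbb{E}[g(X^+,x^-)],
\end{align}
with $\sigma_+^2=\operatorname{Var}(g_+(X^+))$ and $\sigma_-^2=\operatorname{Var}(g_-(X^-))$. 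The kernel is bounded, because $w\in[0,1]$, so every moment condition is trivially satisfied. The two-sample U-statistic theorem therefore gives
\begin{align}
U=\frac1K\sum_k g_+(x_k^+)+\frac1L\sum_l g_-(x_l^-)+O_p(N^{-1}).
\end{align}
The remainder bound holds because the degenerate part has variance $O(1/(KL))$.

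Next I apply the CLT to each independent sum. Since $K/N\to\lambda$, this gives $\sqrt{N}\,U\xrightarrow{d}\mathcal N(0,\sigma_+^2/\lambda+\sigma_-^2/(1-\lambda))$. The denominator is handled by the consistency proposition for $\widehat{\lauc}$ (really its ingredient, the LLN for $\bar w^\pm$), which gives $U_0\to\mu_+\mu_-$ in probability. Slutsky's lemma then yields
\begin{align}
\sqrt N(\widehat{\lauc}(x_i)-\lauc(x_i))\xrightarrow{d}\mathcal N\left(0,\frac{\sigma^2}{(\mu_+\mu_-)^2}\right).
\end{align}

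The main obstacle is bookkeeping rather than analysis. One has to make sure the variance of the ratio matches the stated form, which it does only when $\sigma_\pm^2$ are defined through the centered kernel $g$ above (equivalently, via the delta-method gradient $(1/\mu_+\mu_-,\,-\lauc/\mu_+\mu_-)$ applied to $(U_1,U_0)$). I will state this definition explicitly.

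Ties in $f$ require no extra argument, since the indicator remains bounded. If the half-weight convention of Proposition~\ref{prop:AUCcov} is used instead, the same argument applies verbatim.

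Finally, a consistent standard error follows by plugging empirical projections into the same formula: replace $g_+(x_k^+)$ by $\frac1L\sum_l \hat g(x_k^+,x_l^-)$, and symmetrically for $g_-$, with $\lauc$ replaced by $\widehat{\lauc}$. This is the usual DeLong-type construction, and its consistency follows again from boundedness and the LLN.
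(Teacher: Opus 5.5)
Your proof is correct, but it takes a different route from the paper's. The difference is substantive: the paper's route, as written, gives the wrong variance.

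\textbf{What the paper does.} It applies the two-sample $U$-statistic CLT to the uncentered numerator $U_{K,L}$, whose kernel is $\psi(x,y)=w(x_i,x)w(x_i,y)\1(f(x)>f(y))$. It then splits $\sqrt N(\widehat{\lauc}(x_i)-\lauc(x_i))$ into two pieces: $\sqrt N(U_{K,L}-\mu_U)/(W_+W_-)$ and $\sqrt N\,\mu_U\bigl(1/(W_+W_-)-1/(\mu_+\mu_-)\bigr)$. In the next line it drops the $\sqrt N$ from the second piece, so that Slutsky appears to make it vanish. With the factor restored, that piece equals $-\lauc(x_i)\sqrt N(W_+W_--\mu_+\mu_-)/(\mu_+\mu_-)+o_p(1)$, which is $O_p(1)$, not $o_p(1)$.

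\textbf{Why your version is the right one.} Your exact identity $\widehat{\lauc}-\lauc=(U_1-\lauc\,U_0)/U_0$ absorbs precisely this term into the kernel. That is why your $\sigma_\pm^2$ are projection variances of the centered kernel $g$ rather than of $\psi$, which is how the paper defines them. With the paper's definitions the conclusion fails whenever $w(x_i,\cdot)$ is non-constant. For example, if $f$ separates events from non-events perfectly, then $\widehat{\lauc}(x_i)\equiv 1$ and the limit is degenerate. Yet the paper's $\sigma_+^2=\mu_-^2\operatorname{Var}(w(x_i,X^+))$ is positive, and its non-degeneracy assumption (A3) holds. Your $g\equiv 0$ gives the correct zero variance. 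Your reading of the $\frac1N$ scaling as $\sqrt N$ is also necessary, since with $\frac1N$ the left side converges to $0$.

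\textbf{What each route buys.} Yours gives the correct asymptotic variance using only boundedness of the kernel and $\mu_\pm>0$. It also yields a correct DeLong-type standard error. Your empirical influence value for event $k$ is $\psi^+(x_k^+)/(\hat\mu_+\hat\mu_-)-\widehat{\lauc}\,w(x_i,x_k^+)/\hat\mu_+$, which has sample mean exactly zero. The paper's plug-in in the confidence-interval proposition instead subtracts $\widehat{\lauc}$ itself, so it inherits the same error. The paper's shortcut is exact only in the unweighted case, where $W_+W_-$ is deterministic and the dropped term vanishes identically.
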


The proof can be found in the Supplementary material \ref{proof:asymptoticnormality}.

\subsection{Plug-in Standard Errors via Logit Transformation}

\begin{restatable}{proposition}{hypothesistest}
    \label{hypothesistest}
An asymptotically valid $100(1-\alpha)\%$ confidence interval is as follows as $N\to\infty$:
$$\mathbb{P}\left( \lauc \in \left[ \widehat{\lauc}_N - z_{1-\alpha/2} \frac{\widehat{\sigma}_{\lauc}}{\sqrt{N}}, \; \widehat{\lauc}_N + z_{1-\alpha/2} \frac{\widehat{\sigma}_{\lauc}}{\sqrt{N}} \right] \right) \to 1 - \alpha$$
where $z_{1-\alpha/2}$ is the standard normal critical value.
\end{restatable}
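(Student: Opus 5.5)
The plan is to combine the asymptotic normality of Proposition~\ref{prop:asymptoticnormality} with a consistent plug-in estimator of the asymptotic variance, and then apply Slutsky's theorem. I read Proposition~\ref{prop:asymptoticnormality} with its intended $\sqrt{N}$ scaling:
\[
\sqrt{N}\bigl(\widehat{\lauc}_N(x_i)-\lauc(x_i)\bigr)\xrightarrow{d}\mathcal{N}\bigl(0,\sigma^2_{\lauc}\bigr),
\qquad
\sigma^2_{\lauc}=\frac{\sigma^2}{(\mu_+\mu_-)^2},
\]
with $\sigma^2=\sigma_+^2/\lambda+\sigma_-^2/(1-\lambda)$. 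Here $K/N\to\lambda\in(0,1)$, $\mu_\pm=\mathbb{E}[w(x_i,X^\pm)]>0$, and $\sigma_\pm^2$ are the variances of the H\'ajek projections of the ratio's linearized numerator, $w(x_i,X^+)w(x_i,X^-)\bigl(\idX-\lauc(x_i)\bigr)$, onto $X^+$ and $X^-$ respectively.

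\textbf{Step 1: the plug-in variance.} Define
\[
\widehat{\sigma}^2_{\lauc}=\frac{\widehat{\sigma}_+^2/\widehat\lambda+\widehat{\sigma}_-^2/(1-\widehat\lambda)}{(\widehat\mu_+\widehat\mu_-)^2},
\]
with the following ingredients:
\begin{itemize}
\item $\widehat\lambda=K/N$;
\item $\widehat\mu_+=K^{-1}\sum_k w(x_i,x_k^+)$, and $\widehat\mu_-$ defined analogously;
\item $\widehat\sigma_\pm^2$ the empirical variances of the structural components, in the DeLong style. For an event patient this is
\[
V_k^+=\frac{1}{L}\sum_l w(x_i,x_k^+)\,w(x_i,x_l^-)\bigl(\idx-\widehat{\lauc}_N(x_i)\bigr),
\]
and $V_l^-$ is defined symmetrically by averaging over $k$.
\end{itemize}

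\textbf{Step 2: consistency of the plug-in.}
\begin{itemize}
\item $\widehat\lambda\to\lambda$ holds by assumption.
\item $\widehat\mu_\pm\to\mu_\pm$ by the weak law of large numbers, since $w\in[0,1]$ is bounded.
\item For $\widehat\sigma^2_\pm$, first replace $\widehat{\lauc}_N$ and the inner average by their population counterparts, giving the oracle components
\[
\tilde V_k^+=w(x_i,x_k^+)\,\mathbb{E}_{X^-}\!\left[w(x_i,X^-)\bigl(\1(f(x_k^+)>f(X^-))-\lauc(x_i)\bigr)\right].
\]
These are i.i.d. and bounded, so their empirical variance converges to $\sigma_+^2$.
\item The difference $\max_k|V_k^+-\tilde V_k^+|$ is controlled by two terms: a uniform law of large numbers over $k$ for the inner average over $l$, which holds for bounded indicator/kernel classes (or directly by a U-statistic variance computation showing $\mathbb{E}[\frac1K\sum_k(V_k^+-\tilde V_k^+)^2]=O(1/L)$), and the term $|\widehat{\lauc}_N-\lauc|=o_p(1)$ from the consistency proposition.
\end{itemize}
Together with the continuous mapping theorem (all denominators are bounded away from zero because $\mu_\pm>0$), this gives $\widehat\sigma^2_{\lauc}\to_p\sigma^2_{\lauc}$.

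\textbf{Step 3: Slutsky and inversion.} If $\sigma^2_{\lauc}>0$, Slutsky gives
\[
T_N=\sqrt{N}\bigl(\widehat{\lauc}_N-\lauc\bigr)/\widehat\sigma_{\lauc}\xrightarrow{d}\mathcal{N}(0,1).
\]
Since the normal CDF is continuous, $\mathbb{P}(|T_N|\le z_{1-\alpha/2})\to1-\alpha$. This event is exactly the event that $\lauc$ lies in the stated interval. The same argument applied to $\logit\widehat{\lauc}$, with the delta method (derivative $1/(\lauc(1-\lauc))$), gives the logit-scale interval named in the subsection title. One can then back-transform, provided $\lauc\in(0,1)$.

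\textbf{Main obstacle.} I expect the hardest step to be Step 2 for $\widehat\sigma^2_\pm$: showing that the estimated projection components are uniformly close to the oracle ones. The natural tool is the second-moment U-statistic bound above. I would also need to state the nondegeneracy assumption $\sigma^2_{\lauc}>0$, since otherwise the limit is degenerate and Slutsky does not apply. Finally, I would need to fix the scaling in Proposition~\ref{prop:asymptoticnormality} to $\sqrt N$ so that it matches the $\sqrt N$ in the interval.
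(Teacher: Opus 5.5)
Your proof is correct and follows the paper's skeleton: DeLong-type structural components, a plug-in variance, then a normal approximation. However, your components differ from the paper's, and the difference is substantive.

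\textbf{How the two variance estimators differ.} In the supplement, $\sigma_\pm^2$ are projection variances of the \emph{uncentered} kernel $\psi(x,y)=w(x_i,x)w(x_i,y)\1(f(x)>f(y))$. The paper's components are $\psi^+(x_k^+)/(\hat\mu_+\hat\mu_-)-\widehat{\lauc}_N$, centered by a constant.

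\begin{itemize}
\item Write $g_+(x)=\mathbb{E}_{X^-}[\psi(x,X^-)]$. The paper's $\widehat{\sigma}^2_{\lauc}$ then estimates $\text{Var}(g_+(X^+))/(\mu_+\mu_-)^2$ plus the control analogue. This is the variance of $U_{K,L}/(\mu_+\mu_-)$ as if $W_+W_-$ were non-random.
\item This traces back to the proof of Proposition~\ref{prop:asymptoticnormality}, where the factor $\sqrt{N}$ in front of $\mu_U/(W_+W_-)-\mu_U/(\mu_+\mu_-)$ is dropped. That term is $O_p(1)$, not $o_p(1)$.
\item Your components satisfy $V_k^+/(\hat\mu_+\hat\mu_-)=\psi^+(x_k^+)/(\hat\mu_+\hat\mu_-)-\widehat{\lauc}_N\,w(x_i,x_k^+)/\hat\mu_+$. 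They therefore target $\text{Var}\bigl(g_+(X^+)-\lauc(x_i)\,\mu_-\,w(x_i,X^+)\bigr)$, which is the correct delta-method variance of the ratio.
\item The two coincide when $w(x_i,\cdot)$ is a.s.\ constant, as in Proposition~\ref{prop:AUCiunderuniform}. In general they differ by $2\lauc(x_i)\mu_-\text{Cov}(g_+(X^+),w(x_i,X^+))-\lauc(x_i)^2\mu_-^2\text{Var}(w(x_i,X^+))$.
\end{itemize}

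That difference is generally nonzero and can be negative. So the paper's plug-in can under-cover, while yours gives the stated $1-\alpha$ limit.

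\textbf{What to fix.} The one thing to change is the input to your Step 3. Do not present your CLT as a reading of Proposition~\ref{prop:asymptoticnormality}: with the paper's $\sigma_\pm^2$, that proposition gives the wrong variance. Derive it directly instead:
\begin{itemize}
\item Write $\widehat{\lauc}_N-\lauc(x_i)=\bigl((KL)^{-1}\sum_{k,l}h(x_k^+,x_l^-)\bigr)/(W_+W_-)$, with the mean-zero kernel $h(x,y)=w(x_i,x)w(x_i,y)\bigl(\1(f(x)>f(y))-\lauc(x_i)\bigr)$.
\item Apply the two-sample U-statistic CLT to the numerator, then Slutsky with $W_+W_-\to_p\mu_+\mu_-$. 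This gives exactly your $\sigma^2_{\lauc}$.
\item Impose the nondegeneracy condition on the projections of $h$, not on those of $\psi$ as in the paper's (A3).
\end{itemize}

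Beyond that, your Steps 2 and 3 supply arguments the paper only asserts: consistency of the plug-in via the oracle components, then Slutsky and inversion of the pivot.
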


The derivation of the confidence interval can be found in \ref{proof:hypothesistest}


\newpage
\section{Proposed LOCATOR Algorithm for Subgroup Identification } 

\label{subsec:methodology}
\subsection{Mathematical Formulation}
\label{subsec:MathFormulation}
Consider the input feature matrix $X\in \R^{n\times P}$ with $X=[x_1,x_2,...,x_n]$ and $x_i\in \R^P$ and an existing binary classification model $f$ (e.g. PREVENT in our case) that is being evaluated by the method. The input data should include all the predictors of the existing model being evaluated and the variables needed to differentiate high-performance from low-performance regions. Denote $S$ as the set of predictor variables of the model $f$; thus, the prediction vector can be computed as $\hat{p}=[f(x_{1,S}),...,f(x_{n,S})]$. 

To quantify phenotypic similarity between individuals, we used Gower distance \cite{gower}, which accommodates mixed continuous and binary variables. Let \(\mathbf{x}_i\) and \(\mathbf{x}_j\) be the baseline feature vectors for participants \(i\) and \(j\). The Gower distance \(d_{ij}\) is defined as the average of per-variable contributions:

\[
d_{ij} = \frac{1}{P} \sum_{p=1}^{P} \delta_{ijp},
\]

where for a continuous variable, \(\delta_{ijp} = \frac{|x_{ip} - x_{jp}|}{\text{range}_p}\), and for a binary variable, \(\delta_{ijp} = \mathbf{1}(x_{ip} \neq x_{jp})\). The distance matrix \(\mathbf{D} = (d_{ij})\) was computed using all baseline phenotypic variables $P$. 

For each index participant \(i\), we define similarity weights:

\[
w_{ij} = \left(1 - d_{ij}\right)^{15},
\]

followed by truncation to ensure non-negative weights, consistent with prior work \cite{oikonomou}. Thus, participants phenotypically closer to \(i\) receive higher weights, forming a local neighborhood around \(i\).

For each index participant \(i\), we evaluate the discrimination performance of the reference model within its local neighborhood by computing it localized AUC - a weighted area under the receiver operating characteristic curve (weighted AUC)\cite{weightedAUC}: 

\[
\lauc(x_i) = \frac{\sum_{k,l: y_k=1, y_l=0} [\mathbf{1}(\hat{p}_k > \hat{p}_l) + \frac{1}{2}\mathbf{1}(\hat{p}_k = \hat{p}_l)] \, w_{ik} w_{il}}{\sum_{k,l: y_k=1, y_l=0} w_{ik} w_{il}},
\]

where \(y\) is the true outcome (CVD death) and \(\hat{p}\) is the predicted risk of the reference model. This produces an individual-focused localized AUC value, \(\lauc(x_i)\), reflecting how well PREVENT discriminates CVD risk in participants phenotypically similar to \(i\).

To predict LAUC we train XGBoost regression model using $\logit(\lauc(x_i))$ as an outcome and using baseline phenotypic features $x_i\in \R^P$.

\subsection{LOCATOR algorithm and implementation details}

\begin{algorithm}[H]
\caption{LOCATOR Algorithm}
\label{alg:proposed_algo}
\begin{algorithmic}[1]
\Require Input data matrix $X \in \R^{n \times d}$, true outcomes y, existing model $f$ predictions \(\hat{p}=f(X)\).
\Ensure Identify  high and low competence regions of risk prediction model \(\hat{p}=f(X)\).
\State Split into train and test dataset: $X_\text{train}, X_\text{test}, y_\text{train}, y_\text{test}\gets \text{split}(X,y)$. Line \(2 - 9\) are training process, and lines \(10 - 11\) are testing process.
\State Compute Gower distances using all baseline variables: $d_{kl} \gets \text{Gower}(x_k, x_l), x_k\in X_{\text{train}}, x_l\in X_{\text{train}}$
\State Compute weights: $w_{kl}\gets (1-d_{kl})^{15}$
\For{Reference participant $i=1$ to $|X_\text{train}|$}
    \State $\lauc(x_i) = \frac{\sum_{k,l: y_k=1, y_l=0} [\mathbf{1}(\hat{p}_k > \hat{p}_l) + \frac{1}{2}\mathbf{1}(\hat{p}_k = \hat{p}_l)] \, w_{ik} w_{il}}{\sum_{k,l: y_k=1, y_l=0} w_{ik} w_{il}}$
$\lauc(x_i)$
\EndFor
\State Train XGBoost model to predict $\logit(\lauc(x_i))$: \(\hat{g}_{\mathrm{XGB}} : X_{\text{train}} \mapsto \operatorname{logit}(\lauc(x_i))\)
\State Select features that are important in the final BGBoost model $ \hat{g}_{\mathrm{XGB}} $
\State Use \(\hat{g}_{\mathrm{XGB}}\) to get predictions in test set: $z_{\text{pred}} \gets \hat{g}_{\mathrm{XGB}}(X_{\text{test}})$
\State Identify low- and high-performance subgroups:
\begin{itemize}
    \item median = median(\(z_{\text{pred}}\))
    \item $X_{\text{low}} \gets z_{\text{pred}}< \text{median}$
    \item $X_{\text{high}} \gets z_{\text{pred}}>= \text{median}$
\end{itemize}
\State Run SHAP or other similar analysis to identify what kind of participants constitute low and high performance subgroups in $ X_{\text{train}}$
\end{algorithmic}
\end{algorithm}

\subsection{Interpretation of High and Low Performance Regions with SHAP analysis}
\label{subsec:data_analysis}
To interpret the XGBoost model $\hat{g}_{\mathrm{XGB}}(X_{\text{test}})$ and identify the baseline characteristics driving local  performance, we propose to employ SHapley Additive exPlanations (SHAP) \cite{SHAP}. For each test sample, SHAP value is computed for every feature, representing the contribution of that feature to the predicted \(z_i\) relative to the average prediction. A SHAP summary plot can be used to identify the order of importance of each variable.  Thus, subgroups can be identified according to SHAP values using the most important variables.

We derived global feature importance as the mean absolute SHAP value per feature. Additionally, we visualized the direction and magnitude of each feature’s effect via summary plots. Features with high importance were selected for identifying heterogeneous performance subgroup, and their effects on predicted local performance were further examined using SHAP dependence plots.

To address potential instability due to overfitting, we also calculated the internal importance of XGBoost 'gain' and compared it with the SHAP results as a sensitivity analysis and removed features with gain weaker than 10\% of the most important feature..

In addition, to verify that the XGBoost model has the ability to accurately predict model performance in a newly given specific sample, we used the predicted $z_i$ in the external test set to separate a high-performance subgroup and a low-performance subgroup by their median. We therefore compared the two ROC curves\cite{ROC} of the two subgroups and calculated the standard AUC value of each subgroup. If the LOCATOR algorithm

Furthermore, we examine the association between predicted local performance and observed model discrimination by computing the classical AUC of the PREVENT model for cumulative subgroups defined by increasing percentiles of the predicted $z_i$ (from the lowest $1\%$ to $100\%$, with step size $1\%$).

\newpage
\section{Simulation studies}
\label{subsec:sim_design}

To further validate our method, we considered several distinct simulation scenarios: two positive examples and one negative control example. 

\subsection{Simulation Scenario 1: Identification of Subgroups with Heterogeneous Model performance (Positive Control)}
First, we assume that there is a predictor variable that separates the whole data set into two distinct  subsets \(D_M\) and \(D_N\). Two binary classification reference models  \(f_M\) and \(f_N\) are such that \(f_i\) has more precise prediction than the other in subset \(S_i\) where \(i\) can be either \(M\) or \(N\).

We assumed \(x_1, x_2, x_3\) i.i.d., where \(x_1, x_2 \sim N(0, 1)\),
and \(x_3 \sim U[0, 1]\) as a latent subgroup identifier.

Two models $f_M$ and $f_N$ are defined as: 
\begin{enumerate}
    \item \(f_M(x_1,x_2) = x_1 - 10 x_2\) 
    \item \(f_N(x_1,x_2) = 10 x_1^2 x_2\)
\end{enumerate}

Split the whole data set into two subgroups: \(D = D_M \cup D_N\), where 

\begin{enumerate}
\def\labelenumi{\arabic{enumi}.}
\tightlist
\item
  For $D_M$ \(x_3<0.5\). For each instance, generate true outcome \(y=1\) with probability \(p_M=\logit^{-1}(f_M(x_1,x_2))\). Otherwise \(y=0\).
\item
  For $D_N$ \(x_3\ge0.5\). For each instance, generate true outcome \(y=1\) with probability \(p_N=\logit^{-1}(f_N(x_1,x_2))\). Otherwise \(y=0\).
\end{enumerate}

Construct a simulated training data set with $5000$ instances and testing data set with $1000$ instances, using the rule above. 

Train two XGBoost models $g_M, g_N$ on training data set, using the LOCATOR algorithm in the whole training dataset.
\begin{itemize}
    \item $g_M$ is trained on logit-transformed localized  $\lauc(x_i)$  of reference model $f_M$.
    \item $g_N$ is trained on logit-transformed localized  $\lauc(x_i)$   of reference model $f_N$.
\end{itemize}

Figure \ref{fig:simu} shows that $g_M$ tends to predict higher $\logit(\lauc(x_i)$ values when \(x_3<0.5\) while $g_N$ tends to predict higher values in the \(x_3\ge0.5\) regime.

Recall from the simulation setup that the entire data set was split into two subsets $D_M, D_N$, where $f_M$ performs better than $f_N$ in data set $D_M$, and vise versa. The split between $D_M, D_N$ si defined by \(x_3<0.5\).

\begin{figure}[htbp]
    \centering
    \fbox{\begin{minipage}{0.7\linewidth}
        \centering
\includegraphics[width=3.37in]{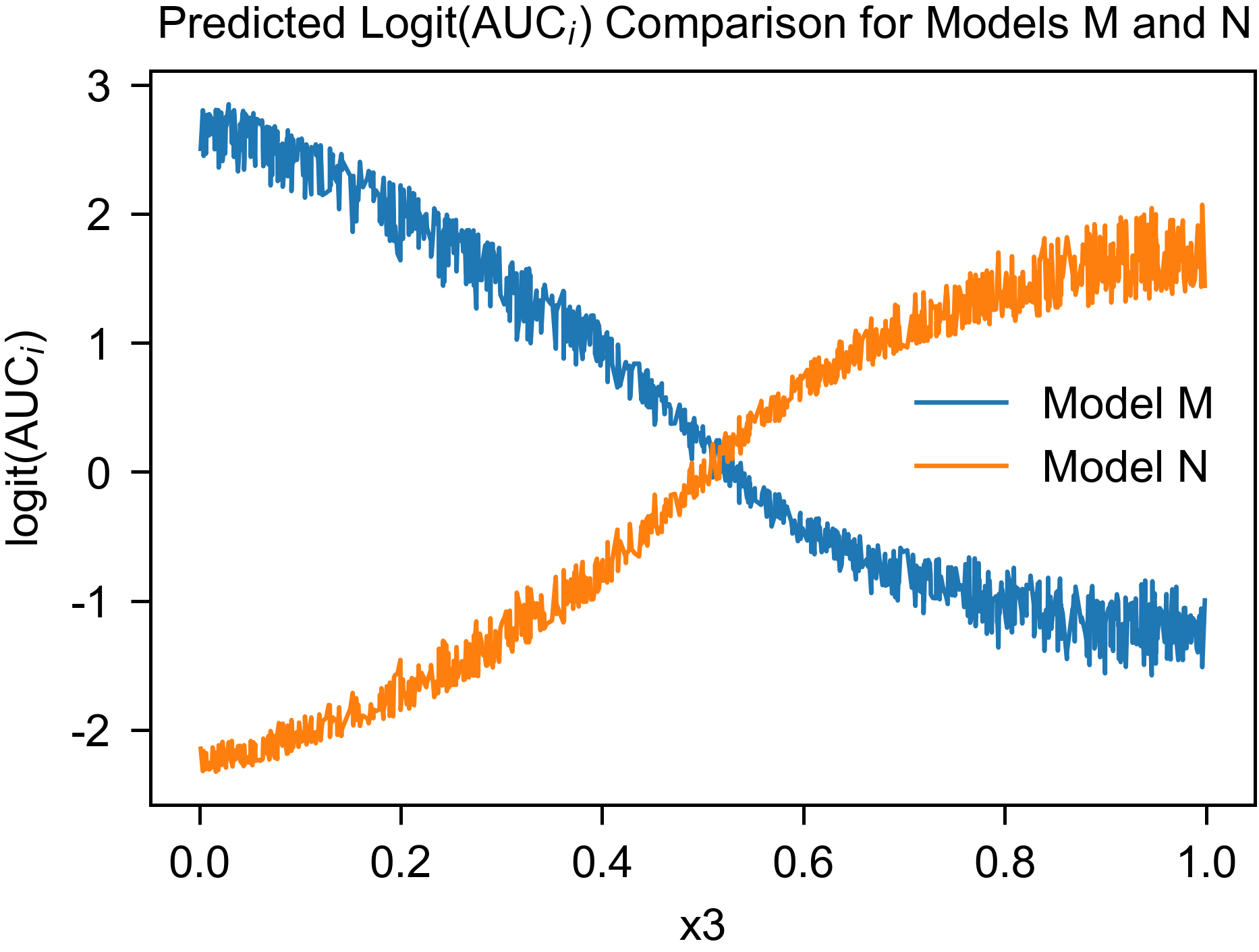}
    \end{minipage}}
    \caption{The comparison of predicted $\logit(\lauc(x_i))$ values between model $g_M$ (\texttt{Model M in legend}) and model $g_N$ (\texttt{Model N in legend}) in a simulated data set. According to the simulation design in section \ref{subsec:sim_design}, model $f_M$ should give more precise prediction when \(x_3<0.5\) than model $f_N$, while model $f_N$ should give more precise prediction when \(x_3\ge 0.5\). $g_M$ and $g_N$ are XGBoost models trained based on the weighted $\logit(\lauc(x_i))$ of $f_M$ and $f_N$ respectively. From the figure it is clear that model $g_M$ has larger $\logit(\lauc(x_i))$ than model $g_N$ when $x_3<0.5$ while model $g_N$ has larger $\logit(\lauc(x_i))$  than model $g_M$ when $x_3\ge 0.5$, which aligns with our expectation.}
    \label{fig:simu}
\end{figure}

This simulations illustrates that localized $\logit(\lauc(x_i))$ is sensitive to $x_3$ that was used to separate \(D_M\) and \(D_N\).

\subsection{Simulation Scenario 2: negative control experiment}
\label{subsec:neg-sim}
To justify that our method works also well when there is no way to differentiate high- and low-performance subgroups, we designed another simulation.

Suppose that there is a dataset which has $6$ variables \(x_1, x_2,\dots,x_6\) and one binary outcome \(y\in \{0,1\}\). The variables follow the probability distribution:
\begin{enumerate}
    \item \(y\) follows Bernoulli distribution \(y\sim\text{Bern}(0.5)\)
    \item \(x_1\) follows conditional normal distribution \(x_1\sim \mathcal{N}(A\cdot y, 1)\) with a chosen parameter \(A=1.812388\)
    \item \(x_2,x_3,\dots,x_6 \overset{\mathrm{iid}}{\sim} \mathcal{N}(0, 1)\)
\end{enumerate}
The reference model being evaluated is the data-generating model \(f(x_1)=\frac{1}{1+e^{-x_1}}\).
So that every baseline variable except \(x_1\) is random noise.

We then applied our method and trained the XGBoost model \(g\) on \(x2,\dots,x_6,y\) (excluding \(x_1\)), so that the model \(g\) is trained purely on the noise. The resulting ROC curves of high\ and low-performance subgroups identified by \(g\) in the holdout test set are shown in Figure \ref{fig:simu-neg}.

\begin{figure}[htbp]
    \centering
    \fbox{\begin{minipage}{0.5\linewidth}
        \centering
    \includegraphics[width=1\linewidth]{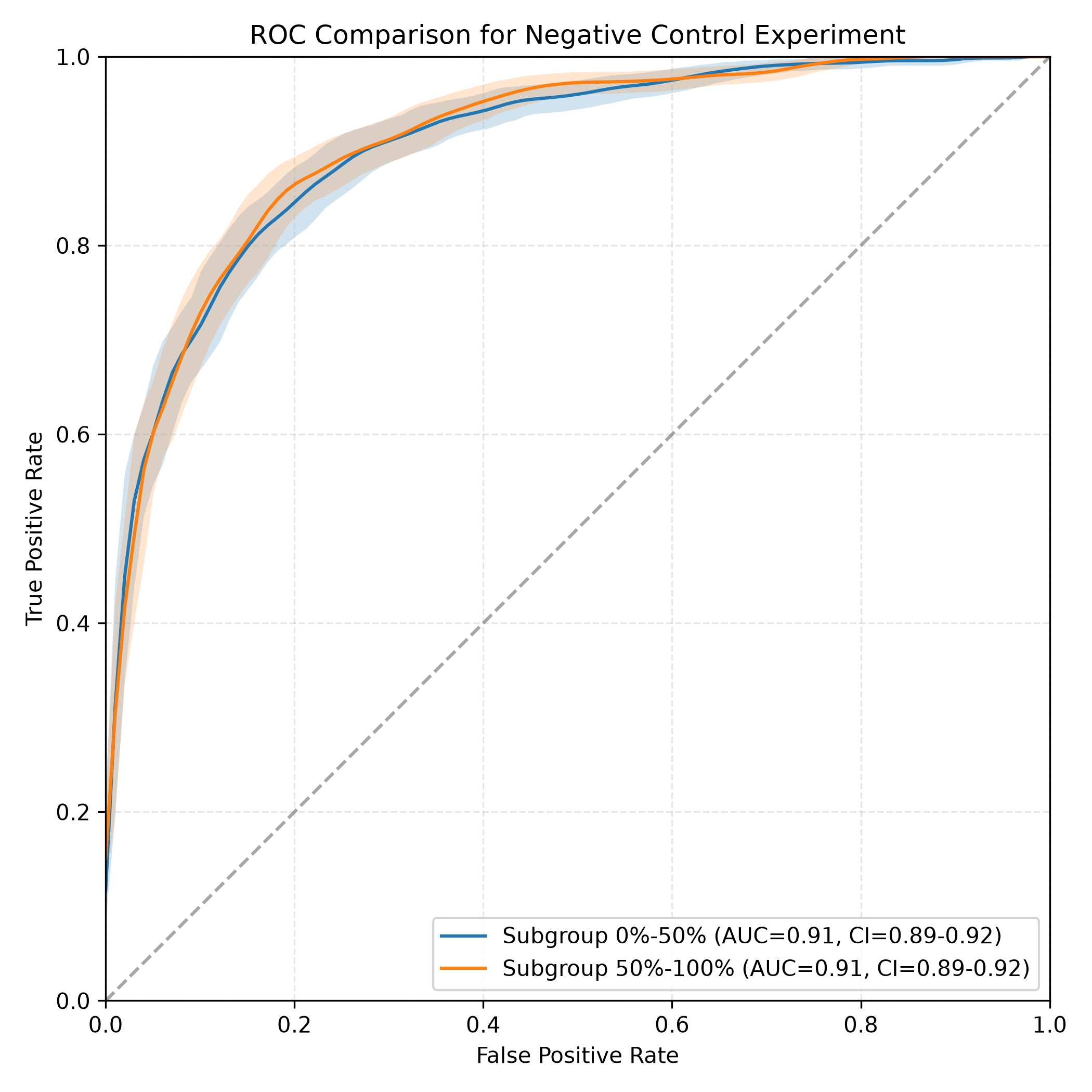}
    \end{minipage}}
    \caption{ROC curves comparison of low- and high-performance subgroups from the negative-control simulation \ref{subsec:neg-sim}. The XGBoost model used to identify the high- and low-performance subgroups is based on pure noise. The confidence bands of the two ROC curves overlap, as expected.}
    \label{fig:simu-neg}
\end{figure}

\subsection{Simulation Scenario 3: Low Performance in the Overlap Region}
\label{subsec:sim-inter}

We set up the following simulation:
\begin{enumerate}
    \item \(y\) follows Bernoulli distribution \(y\sim\text{Bern}(0.5)\)
    \item \(x_1\) follows conditional normal distribution \(x_1\sim \mathcal{N}(A\cdot y, 1)\) with a chosen parameter \(A=1.190232\)
    \item \(x_2 \sim \mathcal{N}(0.17771, 1)\)
    \item \(x_3\sim \mathcal{U}[0,1]\)
\end{enumerate}

This simulations setup is expected to produce unweighted AUC of \(0.8\). The actual unweighted AUC estimated in the test set is \(0.8143\) 

We trained the XGBoost model in the training set and derived the high- and low-performance subgroups in the test set. The ROC curves are also obtained as Figure \ref{fig:simu-inter}. The confidence level is set to be 95\%. The high-performance subgroup has significantly higher ROC curve than low-performance subgroup, and the confidence bands are separable. The standard (unweighted) AUC value for low-performance subgroup is AUC=0.69, with 95\%CI=[0.66; 0.71], which is lower than the standard AUC value for high-performance subgroup  AUC=0.88, with 95\%CI=[0.86-0.90]. 

\begin{figure}[htbp]
    \centering
    \fbox{\begin{minipage}{0.5\linewidth}
        \centering
    \includegraphics[width=1\linewidth]{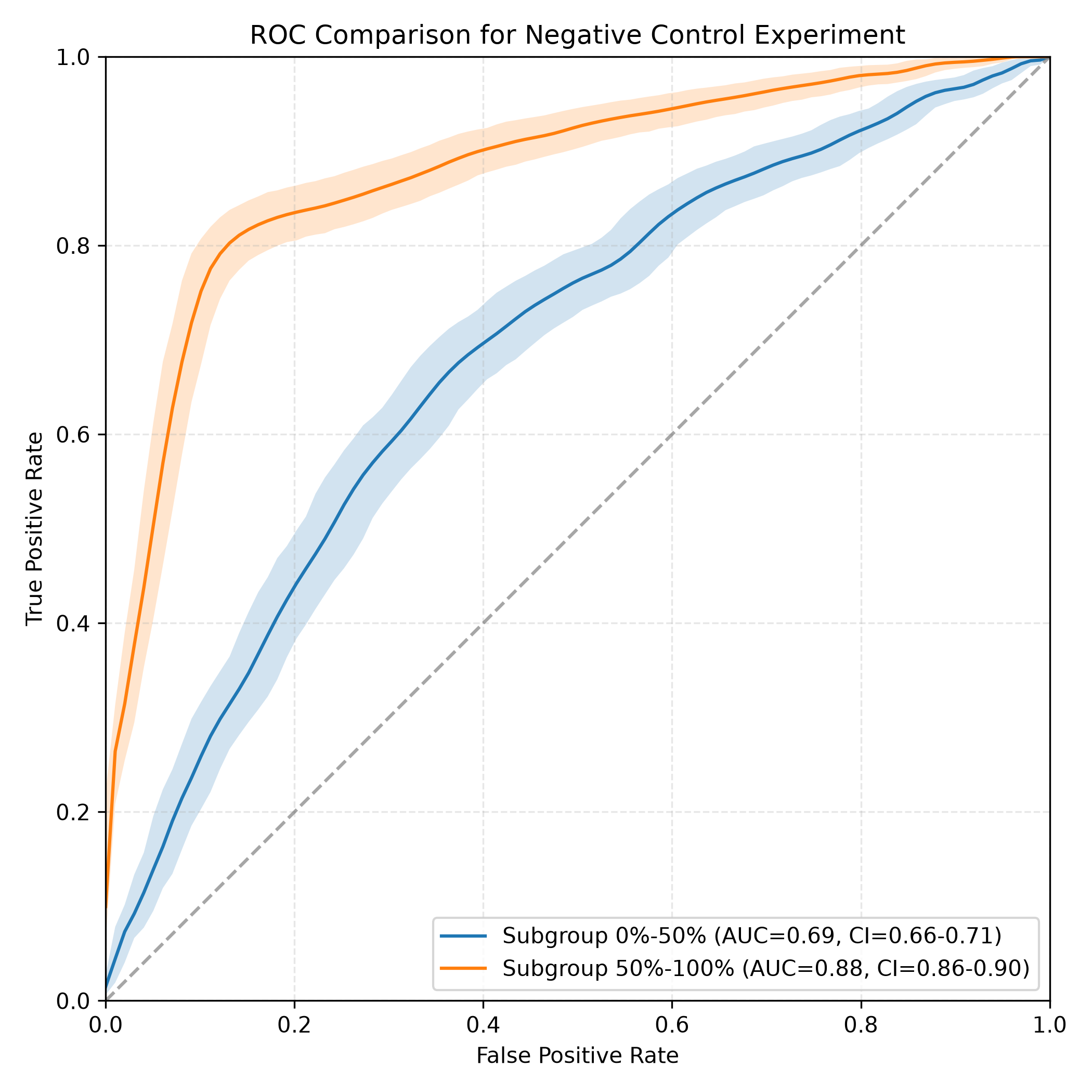}
    \end{minipage}}
    \caption{ROC curves comparison of low- and high-performance subgroups of the simulation as in section \ref{subsec:sim-inter}. The test set is split into low- and high-performance subgroups according to the median of the predictions of the trained XGBoost model. The confidence level is set to be 95\%. The high-performance subgroup has significantly higher ROC curve than low-performance subgroup, and the confidence bands are separable. The standard (unweighted) AUC value for low-performance subgroup is AUC=0.69, with CI=0.66-0.71, which is lower than the standard AUC value for high-performance subgroup  AUC=0.88, with CI=0.86-0.90. The figure gives another validation for our method in identifying performance subgroups of the certain given model.}
    \label{fig:simu-inter}
\end{figure}

Although in this simulation scenario data was produced by the same data-generating model, LOCATOR algorithm still identified high and low performance subgroups. To explain this, we plotted in gray histogram of \(x_1\) in low performance subgroup and in orange in high performance subgroup (Figure \ref{fig:simu-x1}). It is notable that the low-performance subgroup is in the middle of two peaks of the high-performance subgroups. Remember that \(x_1\sim \mathcal{N}(A\cdot y, 1)\) with two normal distribution centers around \(x_1=0\) and \(x_1=A=1.190232\). Therefore, around \(x_1=\frac{A}{2}=0.595116\), two conditional distributions  overlap and it is hard to separate events from non-events in this region. High performance region consists of the left and right boundary of \(x_1\) domain where events and non-events are almost perfectly separated. This example illustrates that a reference model with imperfect discrimination  contains regions with heterogenous performance.

\begin{figure}[htbp]
    \centering
    \fbox{\begin{minipage}{0.5\linewidth}
        \centering
    \includegraphics[width=1\linewidth]{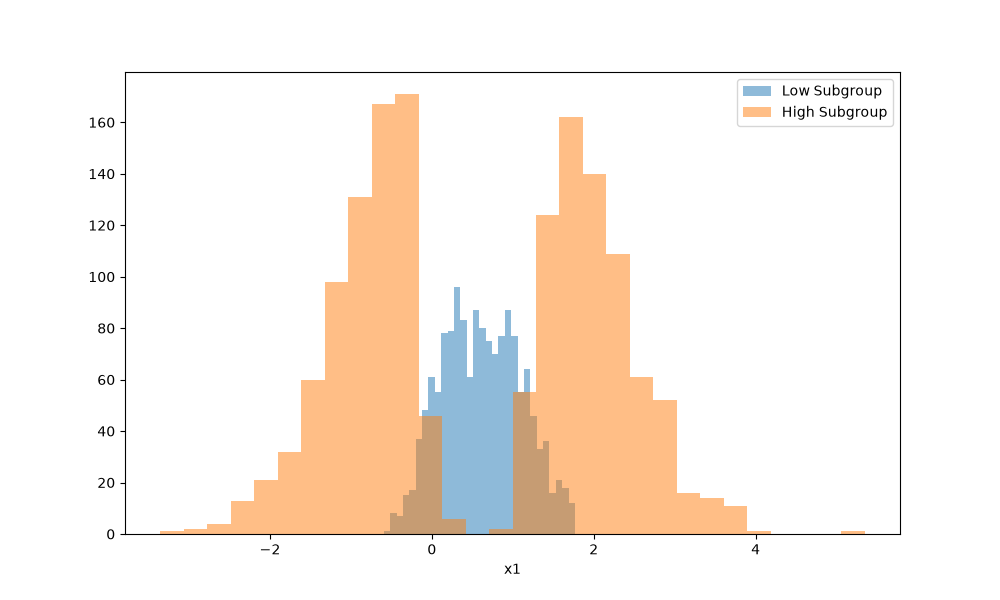}
    \end{minipage}}
    \caption{The distribution of \(x_1\) of low- and high-performance subgroups, identified by the XGBoost model trained in the simulation. The low-performance subgroup is in the middle of two high-performance subgroups, which aligns with the simulation set up \(x_1\sim \mathcal{N}(A\cdot y, 1)\) with two normal distributions overlapping around \(x_1=\frac{A}{2}=0.595116\). So at the middle point it is hard to decide whether the data point comes from the event data \(y=1\) or non-event data \(y=0\).}
    \label{fig:simu-x1}
\end{figure}

\newpage
\section{Disentangling Performance Drivers: Model Heterogeneity vs. Cohort Case-Mix}

Evaluating risk prediction models via localized discrimination functionals-such as the per-person LAUC, may reveal performance gaps between high-competence ($S_1$) and low-competence ($S_2$) subgroups ($\Delta \text{AUC} = \text{AUC}_{S_1} - \text{AUC}_{S_2} > 0$). However, relying solely on a single summary metric like $\Delta \lauc(x_i)$ may conflate fundamentally distinct statistical phenomena . A localized drop in AUC does not automatically imply that the underlying prediction model is structurally flawed. 

To establish actionable guidelines for model remediation versus cohort selection, we formalize a two-axis diagnostic framework that separates \textbf{Model Heterogeneity} (whether the model performs worse than its assigned risks dictate) from \textbf{Cohort Case-Mix / Risk Range Variance} (whether the subgroup naturally lacks outcome-discriminating variance).

\subsection{Local Underperformance Scenarios}

Even when a binary risk model is correctly specified across the entire population, empirical simulations demonstrate that validation cohort characteristics may  generate substantial localized discrimination gaps. We classify localized underperformance into four distinct scenarios based on covariate distribution, predicted risk range, and true model specification.

\subsubsection{Scenario 1.1: Performance Degradation in Overlap Regions}
\textbf{Mechanism:} In Simulation scenario 3 we observed that data-generating model may have low performance region where risk scores overlap and high performance region outside of the ooverlap. This occurs when baseline model discrimination far from imperfect ($\text{AUC} \ll 1.0$), causing substantial overlap between event and non-event risk distributions within specific phenotypic regions.

\textbf{Diagnosis \& Action:} Diagnosed by a wide risk range in low performance region paired low model heterogeneity ($\text{AUC}_{\text{obs}} \ll \text{AUC}_{\text{exp}}$) . This reflects a genuine under-performance of the model. Within low-performance region to discriminate events from non-events better, additional information is required: other biomarker data, imaging etc.

\subsubsection{Scenario 1.2: Homogeneous Covariate Clusters (Validation Cohort Artefact)}
\textbf{Mechanism:} The low-performance subgroup $S_2$ consists of a dense cluster of participants with nearly identical covariate profiles $X$ . Because the model assigns nearly identical predicted risks $\hat{p}_i$ to similar individuals, the localized risk range is compressed .

\textbf{Diagnosis \& Action:} Characterized by a narrow covariate range, narrow predicted risk variance ($\sigma^2_{\text{logit}(\hat{p}) \mid S_2} \approx 0$), and low model heterogeneity ($\text{AUC}_{\text{obs}} \approx \text{AUC}_{\text{exp}}$) . This represents an artifact of sample selection in the validation cohort rather than a defect in the risk model: a model \textit{should} assign identical risks to identical individuals . Refitting the model on the existing covariate set will not improve discrimination . The appropriate remedy is to enrich the validation cohort's covariate diversity or measure novel biomarkers to re-introduce risk variance .

\subsubsection{Scenario 1.3: Covariate Diversity with Flat Risk Range (Biomarker Exhaustion)}
\textbf{Mechanism:} Participants in $S_2$ exhibit diverse covariate vectors $X$, yet the model maps these distinct profiles to a narrow, nearly uniform band of predicted risk . This arises under two distinct conditions:
\begin{enumerate}[label=(\alph*)]
    \item \textit{Omitted Biomarkers:} The existing predictor set lacks the key variables required to stratify risk within this specific sub-population .
    \item \textit{Latent Biomarker Clustering:} An unobserved high-leverage biomarker (with true $\text{AUC} \approx 1.0$) is strongly predicted ($R^2 \approx 1.0$) by observed covariates, but the validation cohort inadvertently enrolled participants sharing an identical value of this latent biomarker .
\end{enumerate}

\textbf{Diagnosis \& Action:} Characterized by reasonable covariate spread alongside a statistically flat risk variance ($R_U < \delta$ via $F$-test) . If driven by omitted predictors, the model must be augmented with novel clinical biomarkers and refit . If driven by latent biomarker clustering, the issue lies in cohort sample selection, implying that ideal validation cohorts should maintain uniform distributions over latent risk factors.

\subsection*{Test for Flat Predicted Risk}

To test whether predicted risk is flat in the low-performance subgroup $S$ compared to its complement $S^c$, we formulate the hypothesis test as:
\begin{equation}
H_0: \frac{\sigma_S^2}{\sigma_{S^c}^2} \ge \delta \quad \text{vs.} \quad H_A: \frac{\sigma_S^2}{\sigma_{S^c}^2} < \delta
\end{equation}
where $S$ is the low-performance subgroup, $S^c$ is its complement, and $\delta \in (0, 1)$ represents the non-flatness threshold ratio.

At the boundary of the null hypothesis ($R = \frac{\sigma_S^2}{\sigma_{S^c}^2} = \delta$), the test statistic scaled by $\delta$ follows an $F$-distribution:
\begin{equation}
\frac{s_S^2}{\delta s_{S^c}^2} \sim F_{n_S-1, \, n_{S^c}-1}
\end{equation}

Because the alternative hypothesis posits a smaller variance in $S$, this is a lower-tail test. The $p$-value is given by:
\begin{equation}
p = P\left(F_{n_S-1, \, n_{S^c}-1} \le \frac{s_S^2}{\delta s_{S^c}^2}\right)
\end{equation}

We reject $H_0$ at significance level $\alpha$ when:
\begin{equation}
\frac{s_S^2}{\delta s_{S^c}^2} \le F_{\alpha; \, n_S-1, \, n_{S^c}-1}
\end{equation}

Equivalently, we can construct a $100(1-\alpha)\%$ one-sided upper confidence bound $R_U$ for the true variance ratio $R = \frac{\sigma_S^2}{\sigma_{S^c}^2}$:
\begin{equation}
R_U = \frac{\hat{R}}{F_{\alpha; \, n_S-1, \, n_{S^c}-1}}
\end{equation}
and reject $H_0$ if $R_U < \delta$.

Derivation of the test is provided in the Appendix.

\subsubsection{Scenario 2: Genuine Local Model Misspecification (Structural Failure)}
\textbf{Mechanism:} The baseline model is structurally misspecified within subgroup $S_2$ due to omitted non-linearities, missing interaction terms, or incorrect variable weights specific to that phenotypic region . Empirical simulations under structural misspecification demonstrate severe discrimination collapse (e.g., $\Delta \text{AUC} \ge +0.35$) .

\textbf{Diagnosis \& Action:} Diagnosed when $S_2$ provides ample risk variance ($\sigma^2_{\text{logit}(\hat{p}) \mid S_2} \gg 0$) yet exhibits a significant excess risk gap ($\text{AUC}_{\text{exp}} - \text{AUC}_{\text{obs}} \gg 0$) . This indicates unambiguous model failure . The remedy requires structural model refitting, localized variable selection, or the introduction of subgroup-specific interaction terms .

\newpage
\section{Application: Localized Discrimination of the PREVENT Model in National Health and Nutrition Examination Survey (NHANES) and External Validation in the UK Biobank}
\label{subsec:real_data}
As a practical example we apply proposed LOCATOR  algorithm to identify high and low performance regions of  PREVENT model. American Heart Association and American College of Cardology currently use PREVENT model to identify patients who would benefit from lipid and blood pressure-lowering treatment. PREVENT model is therefore used for millions of patients. We train LOCATOR algorithm in NHANES data, validate in hold out test subset and externally validate in UK Biobank.
\subsection{NHANES 1999–2010 Cohort Setup and Preprocessing}
 National Health and Nutrition Survey (NHANES) collects data on about $5,000$ adults and children in communities across the United States \cite{cdc_nhanes_about}. We used the cycle range 1999-2010 from NHANES data, with linked National Death Index data through 2019 (N=17,748). We used CVD death as an outcome (N=756).  Data were randomly split into training ($70\%$) and test ($30\%$) subsets stratified by categories of PREVENT predicted risk using the  Python package \emph{scikit-learn}\cite{scikit-learn} . We excluded participants outside of PREVENT predictors' range, with prior CVD,  who are pregnant, and with incomplete set of PREVENT predictor variables. We applied input ranges according to the R package \emph{preventr}\cite{prevetnr}, as shown in Figure \ref{fig:attrition}. Summary statistics of demographics and baseline characteristics in NHANES cycles 1999-2010 are shown in Table \ref{tab:table1} and \ref{tab:table2}.  To address  missing data in variables not used by PREVENT model, we  filtered participants with  more than 40\% of baseline variables missing; we also excluded baseline variables with more than 90\%   missing. After that, we utilized Multiple Imputation by Chained Equations (MICE) with the Python package miceforest to training and testing subsets. 
 The Gower distances, weights, $\lauc(x_i)$  and $z_i=\logit(\lauc(x_i))$ were computed in training data set. We then trained an XGBoost model with input variables including PREVENT predictors and other baseline variables (shown in Table \ref{tab:table1} and \ref{tab:table2}). 
 
Hyperparameters were tuned via random search with 5-fold cross-validation over 100 iterations, exploring ranges for learning rate (0.1–0.3), maximum tree depth (3–10), subsample ratio (0.5–1.0), column subsample ratio (0.5–1.0), and regularization parameters (\(\gamma\), \(\alpha\), \(\lambda\)). Early stopping with 20 rounds of no improvement was applied to prevent overfitting. The final model performance was evaluated on the test set using \(R^2\) and root mean squared error (RMSE). After 5-Fold Cross-Validation hyperparameter tuning, we selected the best parameters.

The model used the tree booster with the objective function znote{The mean squared error (MSE) loss is convex, and XGBoost adds an \(L_2\) regularization term, making the overall objective strongly convex and thus ensuring the convergence of the training loss \cite{Biau2021}. However, due to the greedy algorithms employed for both node splitting and sequential tree addition \cite{XGBoost}\cite{Friedman2001GreedyFA}, the optimization typically converges to a local minimum within the constrained tree-function space rather than the global minimum.}.  To come up with the most parsimonious XGBoost model, we removed predictors that produce smaller  than 10\% gain of the strongest predictor. The  Root-Mean-Square-Error (RMSE) of final XGBoost model was $0.0170$  and the R2 score of $0.9713$ in the test set.

Trained XGBoost model was applied to the test subset which was split into high and low performance subgroup at the median of  predicted  $z_i=\logit(\lauc(x_i))$. Conventional ROC curves were plotted separately in each group and conventional AUC was estimated for each. 

\subsection{Identification of PREVENT Competence Regions in NHANES}

A high and a low performance subgroup were identified by splitting the test set in halves, with $50\%$ percentile of XGBoost prediction $\logit(\lauc(x_i))$. Two ROC curves in the two subgroups were drawn as in Figure \ref{fig:roc}.

The conventional AUC values in high and low performance subgroups are $0.88$ 95\%CI[.82; .93] and $0.78$ 95\%CI[.69; 77] respectively, shown in Table \ref{tab:high-low} and \ref{fig:roc} and \ref{fig:rocUKB}. The two ROC curves shown in Figure \ref{fig:roc} verify direction and the separation of high- and low-performance subgroups identified by our proposed LOCATOR algorithm in holdout test subset (internal validation). The corresponding statistics of size, CVD death number and mean conventional AUC value in each subgroup are shown in Table \ref{tab:high-low}. We plotted conventional AUC in high and low performance subgroups versus percent assigned to low performance subgroup in Figure \ref{fig:auc-ratio}. 

\begin{figure}[htbp]
    \centering
    \fbox{\begin{minipage}{\linewidth}
    \centering
    \includegraphics[width=1\linewidth]{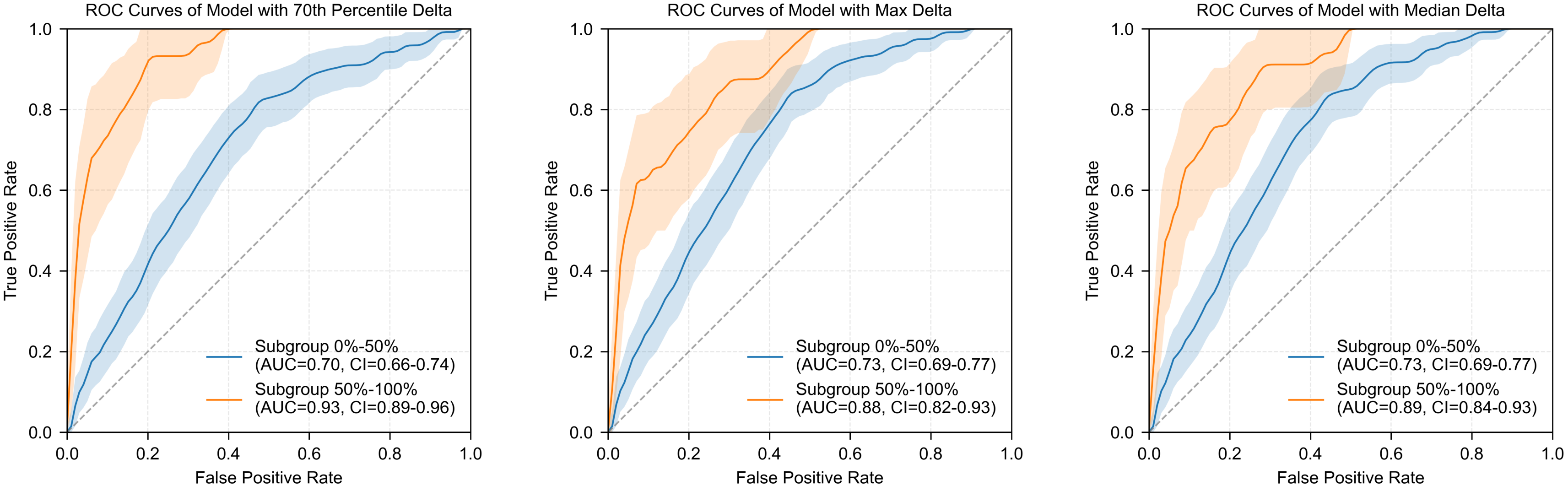}
    \end{minipage}}
    \caption{The three plots show the capability of different XGBoost models with maximum delta (AUC$_i$ difference), 70th percentile delta, and median delta. The two ROC curves in a high- and a low-performance subgroup were identified by splitting above and below the median of XGBoost predictions of $\logit(\lauc(x_i))$. Subgroup \(0\%-50\%\) corresponds to low-performance subgroup and Subgroup \(50\%-100\%\) corresponds to high-performance subgroup. The two ROC curves show  separation, with AUC values shown in the legend significantly different as expected. With confidence level of \(95\%\), the confidence bands are shown in the figure and the confidence intervals of AUC values are shown in the legend. }
    \label{fig:roc}
\end{figure}

A series of conventional AUC values is plotted in Figure \ref{fig:auc-ratio} with respect to the percent in the lowest performance subgroup.

\begin{figure}[htbp]
    \centering
    \fbox{\begin{minipage}{0.5\linewidth}
        \centering
    \includegraphics[width=1\linewidth]{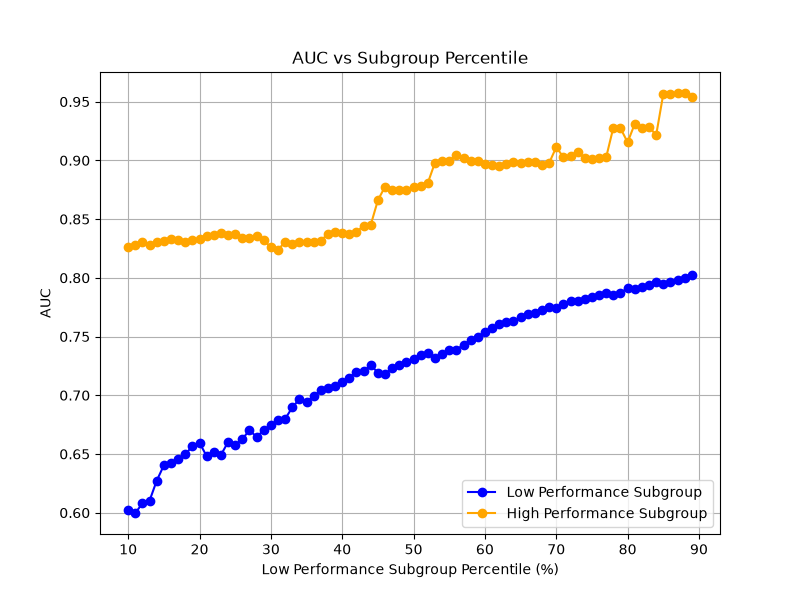}
    \end{minipage}}
    \caption{AUC with respect to the fraction assigned to the  low-and high-performance subgroup in the NHANES data.}
    \label{fig:auc-ratio}
\end{figure}

\begin{table}[]
    \centering
    \begin{tabular}{c|c|c|c}
        Subgroup & Total Number & CVD death number & mean classical AUC value \\
        \hline
        Low perforamance & 2663 & 190 & 0.780 \\
        High performance & 2663 & 37 & 0.880
    \end{tabular}
    \caption{Total sizes, CVD death event numbers and mean AUC values of high and low performance subgroups from test data set}
    \label{tab:high-low}
\end{table}

\subsection{External Validation of PREVENT Competence Regions in UK Biobank}
To demonstrate out-of-sample generalizability of identified competence regions, we applied trained XGBoost model in UK Biobank - a large prospective cohort (N=355,122). Data pre-processing and harmonization of UK Biobank with NHANES data, demographics characteristics and Consort diagram are provided in the Appendix. 

Competence regions learned in the NHANES data were applied to UK Biobank and conventional AUC was calculated in high and low performance subgroups using CVD death as an outcome. AUC in the high-performance subgroup was  0.761 with 95CI 0.742-0.781 and 0.708 with 95CI 0.697-0.719 in the low-performance subgroup in the UK Biobank. These numbers show mild attenuation, but the 95CIs do not overlap and are below the .05 statistical significance threshold. 

\begin{figure}[htbp]
    \centering
    \fbox{\begin{minipage}{\linewidth}
    \centering
    \includegraphics[width=1\linewidth]{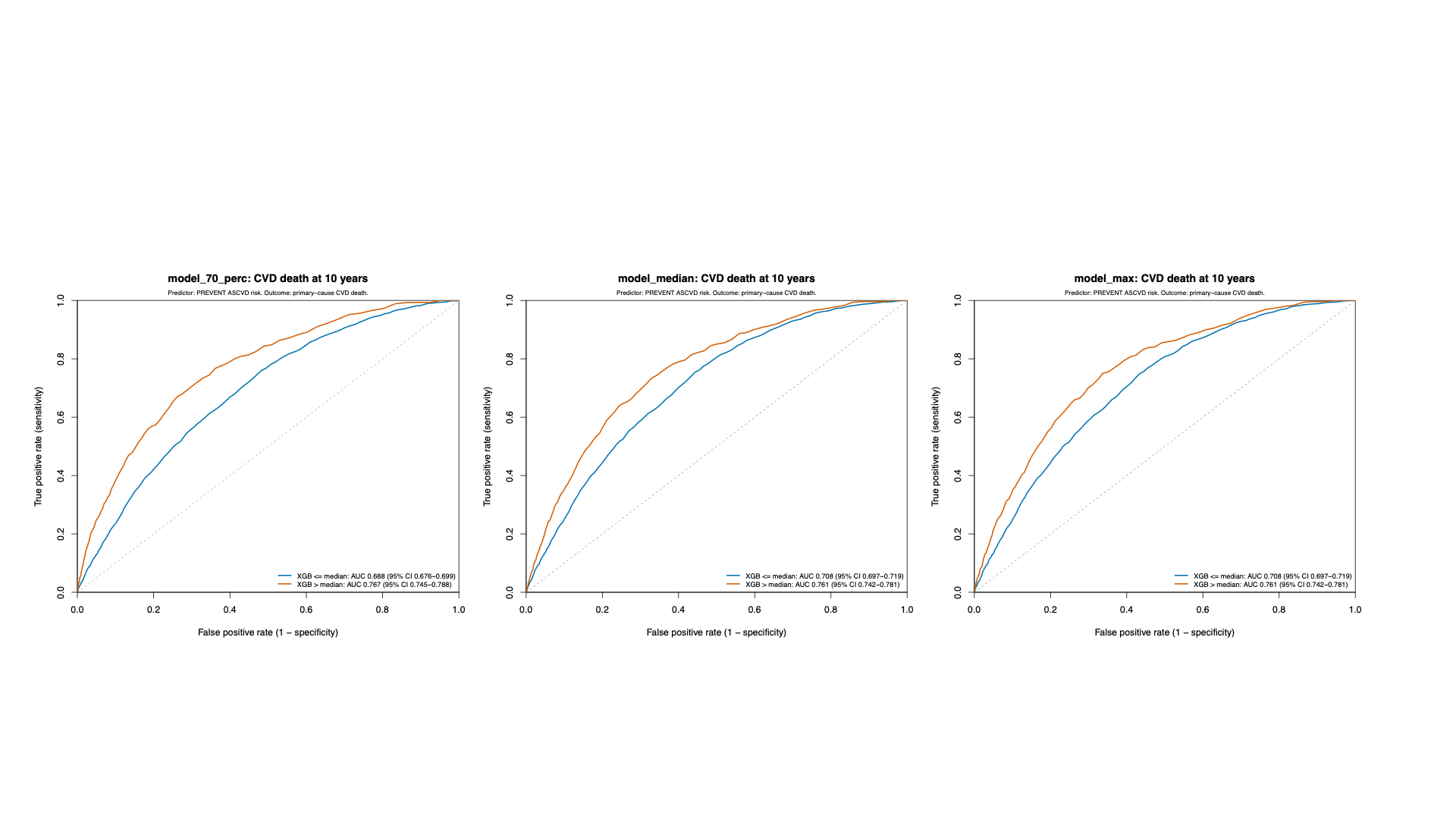}
    \end{minipage}}
    \caption{External Validaton in UK Biobank. The two ROC curves of a high- and a low-performance subgroup were identified by splitting above and below the median of XGBoost predictions of $\logit(\lauc(x_i))$ Subgroup \(0\%-50\%\) corresponds to low-performance subgroup and Subgroup \(50\%-100\%\) corresponds to high-performance subgroup. The two ROC curves show  separation, with AUC values shown in the legend significantly different as expected. With confidence level of \(95\%\), the confidence bands are shown in the figure and the confidence intervals of AUC values are shown in the legend. }
    \label{fig:rocUKB}
\end{figure}

\subsection{Identification of Variables associaited with high and low performance regions with SHAP analysis}
The SHAP summary graph in Figure \ref{fig:shap} shows that the most critical performance drivers are age, followed by sex, taking antihypertensive medications at baseline, current smoking, and the use of statin medication.  SHAP summary plot was obtained from the test dataset, as in Fig \ref{fig:shap}. It shows that older age, blood pressure treatment, being male, smoking, a low poverty income ratio,  taking statin medication, and a history of type 2 diabetes and cancer are associated with low performance of the PREVENT model.

\begin{figure}[htbp]
    \centering
    \fbox{\begin{minipage}{0.5\linewidth}
    \centering
    \includegraphics[width=1\linewidth]{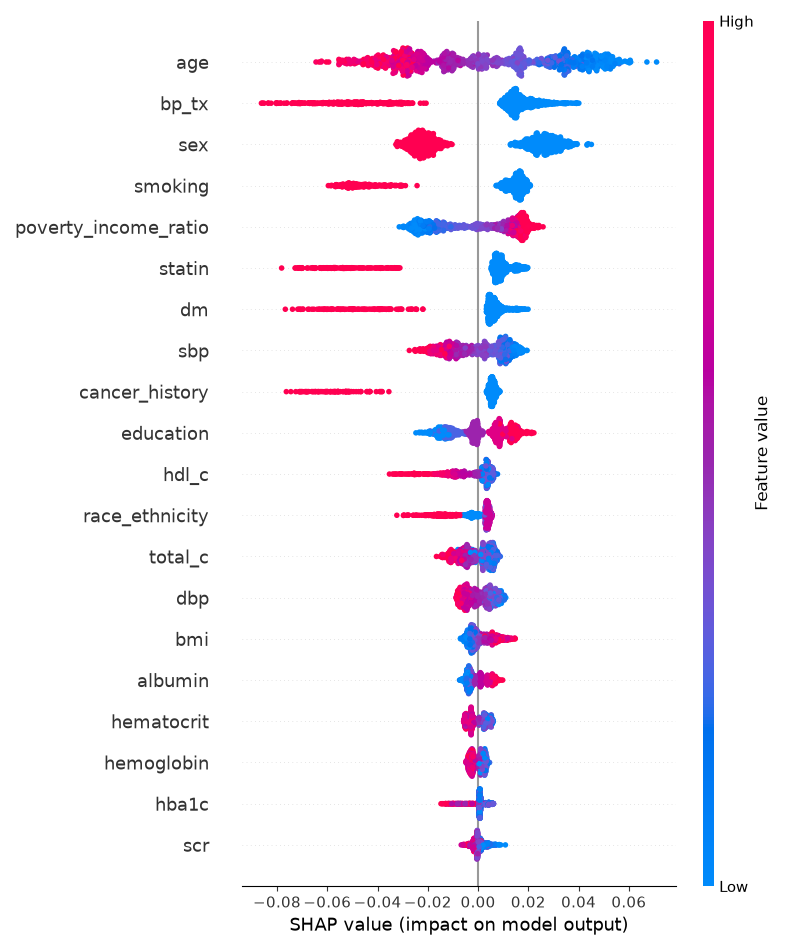}
    \end{minipage}}
    \caption{SHAP summary plot. the most critical variable is age, followed by sex, bp\_tx(now taking prescribed medicine), smoking and statin medication use.}
    \label{fig:shap}
\end{figure}

\section{Discussion}
In this work, we introduced the LOCATOR algorithm and the localized LAUC functional to address the fundamental limitation of global metrics used to validate binary classification models. Current methods of model validation rely on single performance metrics and do not provide information on where in the covariate space the model is underperforming.  LAUC establishes a foundation for individual-level performance assessment and is used in our proposed novel LOCATOR algorithm. The LOCATOR algorithm identifies regions in the covariate space with high or low model performance. By using the localized LAUC functional, LOCATOR avoids data subsetting and loss of statistical power. We provided intuition behind the localized LAUC functional by showing that it is algebraically equal to the ratio of covariance and expectation. We also provided assumptions that guaranty that, on average, the localized LAUC functional  has the right ordering. Using non-parametric two-sample $U$-statistic asymptotic theory, we derived closed-form plug-in variance estimators and logit-stabilized confidence intervals that eliminate the need for computationally intensive resampling in large clinical cohorts. Beyond identifying performance heterogeneity, LOCATOR provides a diagnostic taxonomy to disentangle the causes of underperformance. Through our two-axis diagnostic matrix and a formal $F$-test for flat predicted risk, researchers can distinguish genuine structural model misspecification from validation cohort case-mix restrictions. This distinction may prevent unnecessary model refitting when underperformance is driven by validation cohort homogeneity, while pinpointing specific phenotypic subgroups where structural model updates or novel biomarker discovery are required. Applied to the guidelines-recommended AHA PREVENT model, LOCATOR identified its high and low performance regions in NHANES data ($N = 17,748$; $\text{AUC} = 0.88$ in high-LOCATOR vs. $0.73$ in low-competence regions). The same high and low performance regions were successfully validated in the prospective UK Biobank cohort. SHAP influence decomposition identified older age, antihypertensive treatment, statin therapy, diabetes, and smoking as the main contributors to the lower performance of the PREVENT model. 

\subsection{Remediation Strategies for Low-Competence Regions}

\subsection{Strengths and Limitations}

Several limitations warrant consideration. First, NHANES provides mortality follow-up but lacks non-fatal cardiovascular events, which are part of the PREVENT composite outcome. This discordance may attenuate observed performance and limit comparability with original validation studies. Second, NHANES is cross-sectional, precluding the assessment of dynamic risk factor trajectories that could influence model performance over time. Third, our analysis is restricted to the U.S. population, and generalizability to other settings remains untested. Fourth, the fixed exponential parameter (exp=15) for similarity weighting, while validated in prior work\cite{oikonomou}, may not be optimal for all datasets. Fifth, SHAP associations are not causal; they identify correlates of performance heterogeneity but do not imply causal determinants.

\subsection{Extensions and Future Directions}

Future research should explore adaptive weighting schemes to optimize the exponential parameter for specific datasets, potentially by maximizing subgroup separation or the stability of weighted AUC estimates across bootstrap samples. Extension to other risk models (e.g., PCE, SCORE2) would reveal whether observed heterogeneity is model-specific or systematic. Incorporating time-to-event outcomes and time-dependent AUC metrics would better align with the survival framework of PREVENT. External validation in diverse cohorts-such as UK Biobank or MESA, is essential to assess generalizability and explore racial/ethnic variation. Finally, prospective implementation studies could evaluate the clinical utility of flagging low-performance subgroups to guide targeted model recalibration or supplementary risk assessment.

In addition, a comparison between the division method of XGBoost and by using SHAPley values to divide high- and low-performance subgroups can also be tested, in order to verify the alignment of direct combination of subgroup variable values and XGBoost method and verify the XGBoost method ability in identifying multi-variant interaction and predicting local performance scores. Another comparison between our SHAP summary plot and direct SHAP summary plot derived directly from the existing model (PREVENT) can also be potentially meaningful. 

\subsection{Conclusions}
The novel LOCATOR algorithm and localized LAUC functional proposed in this work can be used to identify low and high performance regions of binary classifiers. As binary classifiers continue to inform high-stakes healthcare decisions for millions of patients, LOCATOR algorithm shifts the model evaluation paradigm from single, pooled cohort summary statistics toward localized performance metrics. We are working on generalizing this approach beyond the clinical domain and risk prediction models.

\newpage
\section{Acknowledgements}

Anna Győrffy-Kerekes would like to thank the ETH Center for Learning Systems for the contributions to her PhD. The authors gratefully acknowledge institutional support from ETH Zurich, Switzerland. This research was conducted using the UK Biobank Resource under application numbers 81959 and 206575. We extend our sincere gratitude to all participants and study organizers of NHANES and UK Biobank for making these datasets available.

\subsection{Statement on AI use}

In the preparation of this work, generative AI tools were used to assist with code writing. Specifically, DeepSeek (\url{https://www.deepseek.com/}) and Gemini (\url{https://gemini.google.com/}) were used to assist with Python code writing. Claude Desktop (free plan, \url{https://claude.ai/download}) was used to assist with R language code writing. These tools were used for code-related assistance, such as suggesting code snippets, debugging, and explaining error messages. All AI-assisted code was reviewed, tested, and adapted by authors, who take full responsibility for the final code and analyses.

During the preparation of this manuscript, large language models, specifically Gemini Pro and Claude Pro were employed to assist with LaTeX debugging, verify the correctness of select proofs, and refine the academic writing. 
The localized AUC functional definition, the LOCATOR algorithm, and Propositions 1-3 with proofs are entirely due to the authors: no use of AI technologies was involved. A conversation with Gemini 3.6 Flash (Google) was used to generate the initial draft of the proofs of Propositions 4-6, which were subsequently reworked by the authors. Text was scanned for errors with the assistance of Gemini 3.6 Flash (Google). Authors take full responsibility for the correctness of the entire content of this manuscript.

\bigskip
\newpage
\begin{center}
{\large\bf SUPPLEMENTARY MATERIAL}
\end{center}



\label{supplement}

\section{Proofs from Section \ref{sec:theory}}
\label{supp:proof_monotonicity}

Let the covariate space $\X$. Let $f:\X\to\R$ be the prediction model. We also define
\begin{align}
    F^+(t)&=\Prob(f(X^+)\leq t),\\
    F^-(t)&=\Prob(f(X^-)\leq t),\\
\end{align}

Let us remind ourselves, that the weighted AUC is defined in Definition \ref{def:local_auc}.

\underuniform*
\begin{proof}
\label{proof:AUCiunderuniform}
\begin{align}
    \lauc(x_i)&=\frac{\EF{w(x_i,X^+)w(x_i,X^-)\idX}}{\EF{w(x_i,X^+)w(x_i,X^-)}}\\
    &=\frac{\EF{c^2\idX}}{\EF{c^2}}\\
    &=\auc.
\end{align}

If we look at the estimator, we get
\begin{align}
    \lauc(x_i)&=\frac{\sum_{k=1}^K\sum_{l=1}^L w(x_i, x_k^+)w(x_i,x_l^-)\idx}{\sum_{k=1}^K\sum_{l=1}^L w(x_i, x_k^+)w(x_i,x_l^-)}\\
    &=\frac{\sum_{k=1}^K\sum_{l=1}^L c^2\idx}{\sum_{k=1}^K\sum_{l=1}^L c^2}\\
    &=\frac{c^2\sum_{k=1}^K\sum_{l=1}^L \idx}{KL c^2}\\
    &= \auc.
\end{align}
\end{proof}

\AUCcov*
\begin{proof}
\label{proof:AUC-cov}

We know
\begin{align}
    \mathbb{E}_p(\gamma)&=\sum_{k,l}p_{k,l}\gamma_{k,l}\\
    &=\sum_{k,l}\left(\frac{W_{kl}^{(i)}}{W_{k,l}^{(j)}}\right)\left(\frac{W_{kl}^{(j)}}{\sum_{a,b}W_{a,b}^{(j)}}\right)\\
    &=\sum_{k,l}\left(\frac{W_{kl}^{(i)}}{\sum_{a,b}W_{a,b}^{(j)}}\right)\\
    &=\frac{\sum_{k,l}W_{kl}^{(i)}}{\sum_{a,b}W_{a,b}^{(j)}}
\end{align}
and
\begin{align}
    \mathbb{E}_p(I)&=\sum_{k,l}p_{k,l} I_{k,l}\\
    &=\sum_{k,l} I_{k,l}\left(\frac{W_{kl}^{(j)}}{\sum_{a,b}W_{a,b}^{(j)}}\right)\\
    &=\lauc(x_j)
\end{align}
and
\begin{align}
    \mathbb{E}_p(\gamma I)&=\sum_{k,l}p_{k,l}\gamma_{k,l} I_{k,l}\\
    &=\sum_{k,l}I_{k,l}\left(\frac{W_{kl}^{(i)}}{W_{k,l}^{(j)}}\right)\left(\frac{W_{kl}^{(j)}}{\sum_{a,b}W_{a,b}^{(j)}}\right)\\
    &=\sum_{k,l}I_{k,l}\left(\frac{W_{kl}^{(i)}}{\sum_{a,b}W_{a,b}^{(j)}}\right)\\
    &=\frac{\sum_{k,l}I_{k,l} W_{kl}^{(i)}}{\sum_{a,b}W_{a,b}^{(j)}}.
\end{align}

Then we have 
\begin{align}
    \frac{\text{Cov}_p(\gamma,I)}{\mathbb{E}_p(\gamma)}&=\frac{\mathbb{E}_p(\gamma I)-\mathbb{E}_p(\gamma)\mathbb{E}_p(I)}{\mathbb{E}_p(\gamma)}\\
    &=\frac{\frac{\sum_{k,l}I_{k,l} W_{kl}^{(i)}}{\sum_{a,b}W_{a,b}^{(j)}}-\lauc(x_j) \frac{\sum_{k,l}W_{kl}^{(i)}}{\sum_{a,b}W_{a,b}^{(j)}}}{\frac{\sum_{k,l}W_{kl}^{(i)}}{\sum_{a,b}W_{a,b}^{(j)}}}\\
    &=\frac{\sum_{k,l}I_{k,l} W_{kl}^{(i)}-\lauc(x_j)\sum_{k,l}W_{kl}^{(i)}}{\sum_{k,l}W_{kl}^{(i)}}\\
    &=\frac{\sum_{k,l}I_{k,l} W_{kl}^{(i)}}{\sum_{k,l}W_{kl}^{(i)}}-\lauc(x_j)\\
    &= \lauc(x_i)-\lauc(x_j).
\end{align}

\end{proof}

Let the covariate space $\X$ be partitioned into two disjoint, phenotypically distinct regions $S_1$ and $S_2$. We assume $S_1\cap S_2=\emptyset$ and $S_1\cup S_2=\X$. Let us also define $(S_1\times S_1)^C$ as $\mathcal{X}\times\mathcal{X}\backslash S_1\times S_1$ and $(S_2\times S_2)^C$ as $\mathcal{X}\times\mathcal{X}\backslash S_2\times S_2.$ We also define
\begin{align}
    F_i^+(t)&=\Prob(f(X^+)\leq t\mid X^+, X^- \in S_i)\quad\text{ for }i\in\{1,2\}\text{ and}\\
    F_i^-(t)&=\Prob(f(X^-)\leq t\mid X^+, X^- \in S_i)\quad\text{ for }i\in\{1,2\}.
\end{align}

\noindent Define the population intra-region and cross-boundary concordance parameters:

\begin{align*}
C_1 &= \mathbb{P}_{F^+ \times F^-}(f(X^+) > f(X^-) \mid X^+, X^- \in S_1) = \int_{-\infty}^{\infty} F_1^-(t) \, dF_1^+(t) \\
C_2 &= \mathbb{P}_{F^+ \times F^-}(f(X^+) > f(X^-) \mid X^+, X^- \in S_2) = \int_{-\infty}^{\infty} F_2^-(t) \, dF_2^+(t) \\
C_{1,2} &= \mathbb{P}_{F^+ \times F^-}(f(X^+) > f(X^-) \mid X^+ \in S_1, X^- \in S_2) = \int_{-\infty}^{\infty} F_2^-(t) \, dF_1^+(t) \\
C_{2,1} &= \mathbb{P}_{F^+ \times F^-}(f(X^+) > f(X^-) \mid X^+ \in S_2, X^- \in S_1) = \int_{-\infty}^{\infty} F_1^-(t) \, dF_2^+(t) \\
C_1^{\text{cross}} &= \mathbb{P}_{F^+ \times F^-}(f(X^+) > f(X^-) \mid (X^+, X^-) \notin S_1 \times S_1) \\
C_2^{\text{cross}} &= \mathbb{P}_{F^+ \times F^-}(f(X^+) > f(X^-) \mid (X^+, X^-) \notin S_2 \times S_2)
\end{align*}

\subsection{Assumptions}
\label{supp:assumptions}

\begin{assumptions}[First-Order Stochastic Dominance]

The reference model risk score distributions satisfy strict First-Order Stochastic Dominance (FSD) between regions $S_1$ and $S_2$:

\begin{itemize}
    \item \textbf{Cases ($Y = 1$):}\\ $f(X_1^+) \underset{st}{\ge} f(X_2^+) \iff F_1^+(t) \le F_2^+(t) \quad \forall t \in \mathbb{R}, X_1^+ \sim F^+(\cdot \mid S_1), X_2^+ \sim F^+(\cdot \mid S_2)$
    
    \item \textbf{Controls ($Y = 0$):}\\ $f(X_1^-) \underset{st}{\le} f(X_2^-) \iff F_1^-(t) \ge F_2^-(t) \quad \forall t \in \mathbb{R}, X_1^- \sim F^-(\cdot \mid S_1), X_2^- \sim F^-(\cdot \mid S_2)$
\end{itemize}
\end{assumptions}

\begin{assumptions}[The groups are seperated with high probability]

Let $W_i=\EF{w(X_i, X^+)w(X_i,X^-)}$ and  $W_j=\EF{w(X_j, X^+)w(X_j,X^-)}$

For $D_1, D_2,\epsilon, \eta\in(0,1)$, we have
\begin{align}
   &\ProbF{{\Eij{i}{1}{\frac{w(X_i,X^+)w(X_i,X^-)}{W_i}}}>D_1\lvert(X^+, X^-)\in (S_1\times S_1)}>\epsilon\quad\text{and,}\\
   &\ProbF{{\Eij{j}{2}{\frac{w(X_j,X^+)w(X_j,X^-)}{W_j}}<D_2}\lvert (X^+, X^-)\in (S_2\times S_2)^C}>\epsilon\quad\text{and,}\\
   &\Prob((X^+, X^-)\in S_1\times S_1)>\frac{1}{4}-\eta\quad\text{and,}\\
   &\Prob((X^+, X^-)\in (S_2\times S_2)^C)<\frac{1}{4}+\eta\quad\text{and,}\\
   &\Prob((X^+, X^-)\in S_2\times S_2)<\frac{1}{4}+\eta\quad\text{and,}\\
   &(\epsilon-(1-C_1))D_1\left(\frac{1}{4}-\eta\right)\geq \left(C_2+(C_2^{cross}(1-\epsilon)+C_2^{cross}D_2\epsilon\right)\left(\frac{1}{4}+\eta\right).
\end{align}
\end{assumptions}

\subsection{Useful Lemma}

\begin{lemma}
    \begin{align}
        C_2\leq C_1^{cross}\leq C_1\quad \text{and}\quad C_2\leq C_2^{cross}\leq C_1.
    \end{align}
\end{lemma}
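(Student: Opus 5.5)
The plan is to first place the four ``pure'' concordance parameters $C_1, C_2, C_{1,2}, C_{2,1}$ in order using Assumption 1. Then I will write $C_1^{cross}$ and $C_2^{cross}$ as convex combinations of them, which puts both in $[C_2, C_1]$.

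Two preliminary facts are needed. First, since $(X^+,X^-)\sim F^+\times F^-$ are independent, conditioning on $X^+\in S_a, X^-\in S_b$ factorizes. Hence $F_a^+$ is simply the law of $f(X^+)$ given $X^+\in S_a$, and similarly for $F_b^-$. This makes the integral representations of $C_1,C_2,C_{1,2},C_{2,1}$ legitimate. Second, I will use the standard stochastic-dominance fact: if $G\ge_{st} H$ and $\varphi$ is bounded and nondecreasing, then $\int\varphi\,dG\ge\int\varphi\,dH$. This follows from a quantile coupling or from integration by parts.

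\textbf{Ordering of the pure parameters.} By the control part of Assumption 1, $F_1^-\ge F_2^-$ pointwise. Integrating against $dF_1^+$ gives $C_1=\int F_1^-\,dF_1^+\ge\int F_2^-\,dF_1^+=C_{1,2}$. Next, $F_2^-$ is a nondecreasing bounded function, and the case part of Assumption 1 says $F_1^+\ge_{st}F_2^+$. The dominance fact then gives $C_{1,2}=\int F_2^-\,dF_1^+\ge\int F_2^-\,dF_2^+=C_2$.

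The same two moves in the other order handle $C_{2,1}$. We have $C_{2,1}=\int F_1^-\,dF_2^+\le\int F_1^-\,dF_1^+=C_1$ by dominance with $\varphi=F_1^-$. We also have $C_{2,1}\ge\int F_2^-\,dF_2^+=C_2$ by the pointwise inequality $F_1^-\ge F_2^-$. Together these give $C_2\le C_{1,2},C_{2,1}\le C_1$.

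\textbf{Mixture step.} Write $\pi_a^{\pm}=\Prob(X^\pm\in S_a)$. The complement $(S_1\times S_1)^C$ is the disjoint union of $S_1\times S_2$, $S_2\times S_1$ and $S_2\times S_2$. By the law of total probability and independence,
\[
C_1^{cross}=\frac{\pi_1^+\pi_2^-\,C_{1,2}+\pi_2^+\pi_1^-\,C_{2,1}+\pi_2^+\pi_2^-\,C_2}{\pi_1^+\pi_2^-+\pi_2^+\pi_1^-+\pi_2^+\pi_2^-}.
\]
This is a convex combination of numbers in $[C_2,C_1]$, so $C_2\le C_1^{cross}\le C_1$. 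Likewise $C_2^{cross}$ is a convex combination of $C_1$, $C_{1,2}$ and $C_{2,1}$, so it also lies in $[C_2,C_1]$. Pieces of zero probability simply drop out, provided the conditioning events in the statement have positive probability, which I will assume as implicit in the definitions.

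\textbf{Main obstacle.} The delicate point is bookkeeping rather than the inequalities themselves. The paper defines $F_i^\pm$ by conditioning on \emph{both} $X^+,X^-\in S_i$, so I must justify that this equals the marginal conditional law. Independence under $F^+\times F^-$ does exactly that. I also have to be careful that strict concordance $f(X^+)>f(X^-)$ corresponds to integrating $F^-(t^-)$ rather than $F^-(t)$ when there are ties. Using left limits $F_b^-(t^-)$ throughout keeps every monotonicity argument valid, since left limits preserve both the pointwise order and monotonicity. Alternatively one can assume continuous score distributions, matching the paper's integral formulas.
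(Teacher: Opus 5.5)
Your proposal is correct and follows essentially the same route as the paper. Both write $C_1^{cross}$ and $C_2^{cross}$ as convex combinations of the pure concordances and reduce to $C_2\le C_{1,2},C_{2,1}\le C_1$, with one inequality coming from the pointwise comparison $F_1^-\ge F_2^-$ and the other from case dominance; the paper gets the latter by integration by parts and leaves $C_{2,1}$ as ``similar.'' Your monotone-integrand (quantile-coupling) version of the dominance step and your use of left limits $F_b^-(t^-)$ are slightly more careful than the paper, whose integrals against $F_b^-(t)$ and integration-by-parts identity are exact only when $f(X^+)$ and $f(X^-)$ have no ties.
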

\begin{proof}
    \begin{align}
        C_1^{cross} &=\mathbb{P}(f(X^+) > f(X^-) \mid X^+, X^- \notin S_1 \times S_1)\\
        &= \mathbb{P}(f(X^+) > f(X^-) \mid X^+, X^- \in (S_1 \times S_2) \cup (S_2 \times S_1) \cup (S_2 \times S_2)\\
        &= \frac{\mathbb{P}(f(X^+) > f(X^-), X^+, X^- \in (S_1 \times S_2) \cup (S_2 \times S_1) \cup (S_2 \times S_2)}{\Prob(X^+, X^- \in (S_1 \times S_2) \cup (S_2 \times S_1) \cup (S_2 \times S_2))}\\
        &= \frac{\mathbb{P}(f(X^+) > f(X^-), X^+, X^- \in (S_1 \times S_2))}{\Prob(X^+, X^- \in (S_1 \times S_2) \cup (S_2 \times S_1) \cup (S_2 \times S_2))}\\
        &+\frac{ \mathbb{P}(f(X^+) > f(X^-), X^+, X^- \in (S_2 \times S_1))}{\Prob(X^+, X^- \in (S_1 \times S_2) \cup (S_2 \times S_1) \cup (S_2 \times S_2))}\\
        &+\frac{ \mathbb{P}(f(X^+) > f(X^-), X^+, X^- \in (S_2 \times S_2))}{\Prob(X^+, X^- \in (S_1 \times S_2) \cup (S_2 \times S_1) \cup (S_2 \times S_2))}\\
        &= \frac{\mathbb{P}(f(X^+) > f(X^-)\mid X^+, X^- \in (S_1 \times S_2))\Prob(X^+, X^- \in (S_1 \times S_2))}{\Prob(X^+, X^- \in (S_1 \times S_2) \cup (S_2 \times S_1) \cup (S_2 \times S_2))}\\
        &+\frac{ \mathbb{P}(f(X^+) > f(X^-)\mid X^+, X^- \in (S_2 \times S_1))\Prob(X^+, X^- \in (S_2 \times S_1))}{\Prob(X^+, X^- \in (S_1 \times S_2) \cup (S_2 \times S_1) \cup (S_2 \times S_2))}\\
        &+\frac{ \mathbb{P}(f(X^+) > f(X^-)\mid X^+, X^- \in (S_2 \times S_2))\Prob(X^+, X^- \in (S_2 \times S_2))}{\Prob(X^+, X^- \in (S_1 \times S_2) \cup (S_2 \times S_1) \cup (S_2 \times S_2))}\\
        &=\frac{C_{1,2} \Prob(X^+, X^- \in (S_1 \times S_2))+C_{2,1}\Prob(X^+, X^- \in (S_2\times S_1))+ C_2\Prob(X^+, X^- \in (S_2 \times S_2))}{ \Prob(X^+, X^- \in (S_1 \times S_2))+\Prob(X^+, X^- \in (S_2\times S_1))+ \Prob(X^+, X^- \in (S_2 \times S_2))}\\
        &= \lambda_{12}C_{1,2} + \lambda_{21}C_{2,1} + \lambda_{22}C_2\text{, a convex combination of Cs.}\\
    \end{align}
    where $\lambda_{1,2}=\Prob(X^+, X^- \in (S_1 \times S_2)), \lambda_{2,1}\Prob(X^+, X^- \in (S_2\times S_1))$ and $\lambda_2= \Prob(X^+, X^- \in (S_2 \times S_2))$ and $\lambda_{1,2},\lambda_{2,1},\lambda_2\in[0,1]$.\\
    \noindent Similarly, 
    \begin{align}
        C_2^{cross} &= \mathbb{P}(f(X^+) > f(X^-) \mid X^+, X^- \notin S_2 \times S_2\\
        &= \mathbb{P}(f(X^+) > f(X^-) \mid X^+, X^- \in (S_1 \times S_2) \cup (S_2 \times S_1) \cup (S_1 \times S_1)\\
        &= \gamma_{12}C_{1,2} + \gamma_{21}C_{2,1} + \gamma_{11}C_1\text{, a convex combination of Cs.}
    \end{align}
    So it is enough to show, $C_2\leq C_{1,2}, C_{2,1}\leq C_1$.\\
    \textbf{1. Lower Bound:} $C_{1,2} \ge C_2$.

    \noindent By Assumption 1: $f(X_1^+) \underset{st}{\ge} f(X_2^+) \iff F_1^+(t) \le F_2^+(t), \forall t$ (cases in $S_1$ stochastically dominate cases in $S_2$). Because $F_2^-(t)$ is a non-decreasing function and $F_1^+(t) \le F_2^+(t)$:
    
    \begin{align*}
    C_{1,2} &= \int_{-\infty}^{\infty} F_2^-(t) \, dF_1^+(t) = [F_2^-(t)F_1^+(t)]_{-\infty}^{\infty} - \int_{-\infty}^{\infty} F_1^+(t) \, dF_2^-(t) \\
    &= 1 - \int_{-\infty}^{\infty} F_1^+(t) \, dF_2^-(t) \ge 1 - \int_{-\infty}^{\infty} F_2^+(t) \, dF_2^-(t) = C_2
    \end{align*}
    
    \noindent \textbf{2. Upper Bound:} $C_{1,2} \le C_1$.
    
    \noindent By Assumption 1: $F_2^-(t) \ge F_1^-(t), \forall t$ (controls in $S_1$ have stochastically lower scores than controls in $S_2$):
    
    $$C_{1,2} = \int_{-\infty}^{\infty} F_2^-(t) \, dF_1^+(t) \le \int_{-\infty}^{\infty} F_1^-(t) \, dF_1^+(t) = C_1$$
    
    \noindent Thus, $C_2 \le C_{1,2} \le C_1$. Simlarly, we can show $C_2\leq C_{2,1}\leq C_1$.
\end{proof}

\subsection{Main Theorem}

\AUC*

\begin{proof}
\label{proof:AUC}
    \begin{align}
    \Eij{i}{1}{\lauc(X_i)}&=\Eij{i}{1}{\frac{\EF{w(X^+,X_i)w(X^-,X_i)\idX}}{W_i}}\\
    &=\Eij{i}{1}{\EF{\frac{w(X^+,X_i)w(X^-,X_i)\idX}{W_i}}}\\
    &=\EF{\Eij{i}{1}{\frac{w(X^+,X_i)w(X^-,X_i)\idX}{W_i}}}\\
    &=\EF{\idX\Eij{i}{1}{\frac{w(X^+,X_i)w(X^-,X_i)}{W_i}}}.
    \end{align}
    Similarly, 
    \begin{align}
        \Eij{j}{1}{\lauc(X_j)}&=\EF{\idX\Eij{j}{1}{\frac{w(X^+,X_j)w(X^-,X_j)}{W_j}}}.
    \end{align}
    Let $D_1(X_i)=\Eij{i}{1}{\frac{w(X^+,X_i)w(X^-,X_i)}{W_i}}$. Now by intersection our expectations with $S_1\times S_1$ and $S_2\times S_2$ we find:
    \begin{align}
    &\Eij{i}{1}{\lauc(X_i)}=\\
    &=\EF{\idX D_1(X_i)\lvert (X^+, X^-)\in S_1\times S_1}\Prob\left((X^+,X^-)\in S_1\times S_1\right)+\\
    &+\EF{\idX D_1(X_i)\lvert ((X^+, X^-)\in (S_1\times S_1)^C}\Prob\left((X^+,X^-)\in (S_1\times S_1)^C\right).
    \end{align}
    Similarly, if 
    $D_2(X_j)=\Eij{j}{2}{\frac{w(X^+,X_j)w(X^-,X_j)}{W_j}}$, then we have
     \begin{align}
    &\Eij{j}{2}{\lauc(X_j)}=\\
    &=\EF{\idX D_2(X_j)\lvert (X^+, X^-)\in S_2\times S_2}\Prob\left((X^+,X^-)\in S_2\times S_2\right)+\\
    &+\EF{\idX D_2(X_j)\lvert ((X^+, X^-)\in (S_2\times S_2)^C}\Prob\left((X^+,X^-)\in (S_2\times S_2)^C\right).
    \end{align}
    By Assumption 2 \ref{supp:assumptions} we have
    \begin{align}
    &\EF{\idX D_1(X_i)\lvert (X^+, X^-)\in S_1\times S_1}\Prob\left((X^+,X^-)\in S_1\times S_1\right)\geq\\
    &\geq(\epsilon-(1-C_1))D_1\left(\frac{1}{4}-\eta\right).
    \end{align}
    \begin{align}
        \EF{\idX D_1(X_i)\lvert ((X^+, X^-)\in (S_1\times S_1)^C}\Prob\left((X^+,X^-)\in (S_1\times S_1)^C\right)\geq 0.
    \end{align}
    Hence, we have
    \begin{align}
        \Eij{i}{1}{\lauc(X_i)}\geq (\epsilon-(1-C_1))D_1\left(\frac{1}{4}-\eta\right).
    \end{align}
    Similarly, we have
    \begin{align}
        &\EF{\idX D_2(X_j)\lvert (X^+, X^-)\in S_2\times S_2}\Prob\left((X^+,X^-)\in S_2\times S_2\right)\leq\\
        &\leq C_2\left(\frac{1}{4}+\eta\right).
    \end{align}
    and
    \begin{align}
        &\EF{\idX D_2(X_j)\lvert ((X^+, X^-)\in (S_2\times S_2)^C}\Prob\left((X^+,X^-)\in (S_2\times S_2)^C\right)\leq\\
        &\leq (C_2^{cross}(1-\epsilon)+C_2^{cross}D_2\epsilon)\left(\frac{1}{4}+\eta\right).
    \end{align}
    Hence, we have
    \begin{align}
        \Eij{j}{2}{\lauc(X_j)}\leq  C_2\left(\frac{1}{4}+\eta\right)+(C_2^{cross}(1-\epsilon)+C_2^{cross}D_2\epsilon)\left(\frac{1}{4}+\eta\right).
    \end{align}
    So if $(\epsilon-(1-C_1))D_1\left(\frac{1}{4}-\eta\right)\geq \left(C_2+(C_2^{cross}(1-\epsilon)+C_2^{cross}D_2\epsilon\right)\left(\frac{1}{4}+\eta\right)$, then we are finished, which is part of Assumption 2.
    
\end{proof}

\begin{remark}
    Note that for example 
    \begin{align}
        D_1&=1-\frac{\delta\gamma}{4(2C_1\delta+\gamma)}\\
        D_2&=\frac{\delta\gamma}{2(\gamma-1)(2\gamma-\delta)}\\
        \epsilon &=1-\frac{\delta}{2\gamma}\\
        \eta&=\frac{\delta\gamma}{4(\gamma(16C_1-3\delta)-8\delta}\\
        \delta&=C_1-C_2, \gamma=1+C_2^{cross}
    \end{align}
    would make
    \begin{align}
    (\epsilon-(1-C_1))D_1\left(\frac{1}{4}-\eta\right)\geq \left(C_2+(C_2^{cross}(1-\epsilon)+C_2^{cross}D_2\epsilon\right)\left(\frac{1}{4}+\eta\right)
    \end{align}
    true.
\end{remark}

\subsection{Estimator Construction and Asymptotic Consistency}

    Let us define the following kernel:
    \begin{align}
        \psi(x, y)=w(x_i, x)w(x_i,y)\mathbbm{1}(f(x)>f(y))
    \end{align}

    Let us also say,
    \begin{align}
        U_{K,L}&=\frac{1}{KL}\sum_{k=1}^K\sum_{l=1}^Lw(x_i, X^+)w(x_i, X^-)\idX\\
        &=\frac{1}{KL}\sum_{k=1}^K\sum_{l=1}^L\psi(x_k, x_l)\quad\text{and,}\\
        W_+&=\frac{1}{K}\sum_{k=1}^Kw(x_i, x_k^+)\quad\text{and,}\\
        W_-&=\frac{1}{L}\sum_{l=1}^Lw(x_i, x_l^-).
    \end{align}

    Let us assume that $N=K+L\to\infty$ and $\frac{K}{N}\to\lambda\in(0,1)$. Then let 
    \begin{align}
        \mu_U&=\EF{w(x_i,X^+)w(x_i, X^-)\idX}=\EF{\psi(X^+, X^-)}\\
        \mu_+&=\EF{w(x_i,X^+)}\\
        \mu_-&=\EF{w(x_i,X^-)}.
    \end{align}
    With this notation $\lauc(x_i) =\frac{\mu_U}{\mu_+\mu_-}$ and $\widehat{\lauc}(x_i)=\frac{U_{KL}}{W_+W_-}$.

    \textbf{Regularity Assumptions:}
    \label{ass:regularity}
    \begin{itemize}
        \item \textbf{(A1)} Finite, non-zero mean: $0<\mu_+, \mu_-<\infty$
        \item \textbf{(A2)} Finite, non-zero variance: $0<\text{Var}(X^+),\text{Var}<\infty.$
        \item \textbf{(A3)} Non-degeneracy: 
        $\text{Var}_{X^+\sim F^+}(\mathbb{E}_{X^-\sim F^-}(\psi(X^+,X^-))),\text{Var}_{X^-\sim F^-}(\mathbb{E}_{X^+\sim F^+}(\psi(X^+,X^-))) > 0$.
    \end{itemize}

\laucustatistic*

\begin{proof}
    \label{proof:lauc-ustatistic}
    We will rely on two-sample U-statistics theory to derive asymptotic properties of $\widehat{\lauc}$. Let us start by looking at our estimator:
    \begin{align}
        \widehat{\lauc}(x_i)&=\frac{\sum_{k=1}^K\sum_{l=1}^L w(x_i, x_k^+)w(x_i,x_l^-)\idx}{\sum_{k=1}^K\sum_{l=1}^L w(x_i, x_k^+)w(x_i,x_l^-)}\\
        &=\frac{\sum_{k=1}^K\sum_{l=1}^L w(x_i, x_k^+)w(x_i,x_l^-)\idx}{\left(\sum_{k=1}^Kw(x_i, x_k^+)\right)\left(\sum_{l=1}^L w(x_i,x_l^-)\right)}
    \end{align}

    By the Weak Law of Large Numbers (WLLN) for sample averages:
    \begin{align}
        W_+ &\xrightarrow{p} \mu_{+}\quad\text{and,}\\
        W_- &\xrightarrow{p} \mu_{-}.
    \end{align}
    By the Law of Large Numbers for Two-Sample $U$-Statistics, $U_{K,L} \xrightarrow{p} \mu_U$.\\
    By Continuous Mapping Theorem on function $g(u, v, w) = \frac{u}{v w}$ (continuous at $\mu_{^+}, \mu_{^-} > 0$), $\frac{U_{K,L}}{W_+W_-} \xrightarrow{p} \frac{\mu_U}{\mu_{+} \mu_{-}} =\lauc$.
        
\end{proof}

\asymptoticnormality*
\begin{proof}
    \label{proof:asymptoticnormality}
    By asymptotic normality of generalized two sample U-statistic \cite{lee1990ustatistics}, we have
    \begin{align}
        \sqrt{N}(U_{K,L}-\mu_U)\xrightarrow{d}\mathcal{N}(0,\sigma^2), \quad\text{where}\quad \sigma^2=\frac{\sigma_+^2}{\lambda} +\frac{\sigma_-^2}{1-\lambda}.
    \end{align}
    In this equation $\lambda = \lim_{N\to\infty} K/N$ and
    \begin{align}
        \sigma_+^2&=\text{Var}_{X^+\sim F^+}\left(\mathbb{E}_{X^-\sim F^-}\left[\psi(X^+,X^-)\right]\right)\\
        \sigma_-^2&=\text{Var}_{X^-\sim F^-}\left(\mathbb{E}_{X^+\sim F^+}\left[\psi(X^+,X^-)\right]\right).
     \end{align}
     By Slutzky \cite{Slutsky1925-ud}, we have 
     \begin{align}
         \sqrt{N}(\widehat{\lauc}(x_i)-\lauc(x_i))&=\sqrt{N}\left(\frac{U_{K,L}}{W_+W_-}-\frac{\mu_U}{\mu_+\mu_-}\right)\\
         &=\sqrt{N}\left(\frac{U_{K,L}-\mu_U}{W_+W_-}+\left(\frac{\mu_U}{W^+W^-}-\frac{\mu_U}{\mu_+\mu_-}\right)\right)\\
         &=\left(\frac{\sqrt{N}(U_{K,L}-\mu_U)}{W_+W_-}+\left(\frac{\mu_U}{W_+W_-}-\frac{\mu_U}{\mu_+\mu_-}\right)\right)\\
         &=\mathcal{N}(0, \sigma^2/(\mu_+\mu_-)^2).
     \end{align}
\end{proof}

\hypothesistest*

\begin{proof}
\label{proof:hypothesistest}

We aim to build an etimator $\sigma^2/(\mu_+\mu_-)^2$, and on this path we need estimators of $\sigma_+/(\mu_+\mu_-)^2$ and $\sigma_-/(\mu_+\mu_-)^2$. Let's call the estimator of $\sigma^2/(\mu_+\mu_-)^2$ as $\hat{\sigma}_{\lauc}$ and the estimator of $\sigma_+/(\mu_+\mu_-)^2$ and $\sigma_-/(\mu_+\mu_-)^2$ $\hat{\sigma}_{+,\lauc}^2$ and  $\hat{\sigma}_{-,\lauc}^2$ respectively. An estimator of $\mathbb{E}_{X^-\sim F^-}\left[\psi(X^+,X^-)\right]$, we have
\begin{align}
    \psi^+(x^+_k)=w(x_i,x^+_k)\left(\frac{1}{L}\sum_{l=1}^Lw(x_i,x_l^-)\idx \right),
\end{align}
and similarly an estimator of $\mathbb{E}_{X^+\sim F^+}\left[\psi(X^+,X^-)\right]$, we have
\begin{align}
    \psi^-(x^-_l)=w(x_i,x^-_l)\left(\frac{1}{K}\sum_{k=1}^Kw(x_i,x_k^+)\idx\right).
\end{align}

Let us also define:
\begin{align}
    \hat{\mu}_+=\frac{1}{K}\sum_{k=1}^Kw(x_i,x_k^+)\quad\text{and}\quad
    \hat{\mu}_-=\frac{1}{L}\sum_{l=1}^Kw(x_i,x_l^-).
\end{align}
Hence consistent estimators of the variances are
\begin{align}
   \hat{\sigma}_{+,\lauc}^2=\frac{1}{K-1}\sum_{k=1}^K\left(\frac{\psi^+(x^+_k)}{\hat{\mu}_+\hat{\mu}_-}-\widehat{\lauc}_N\right)^2\quad&\text{and}\quad
    \hat{\sigma}_{-,\lauc}^2=\frac{1}{L-1}\sum_{l=1}^L\left(\frac{\psi^-(x^-_l)}{\hat{\mu}_+\hat{\mu}_-}-\widehat{\lauc}_N\right)^2\\
    \hat{\sigma}^2_{\lauc}=&\frac{N}{K}\hat{\sigma}_{+,\lauc}+\frac{N}{L}\hat{\sigma}_{-,\lauc}
\end{align}

Finally the asymptotically valid $100(1-\alpha)\%$ confidence interval is as follows as $N\to\infty$:
$$\mathbb{P}\left( \lauc \in \left[ \widehat{\lauc}_N - z_{1-\alpha/2} \frac{\widehat{\sigma}_{\lauc}}{\sqrt{N}}, \; \widehat{\lauc}_N + z_{1-\alpha/2} \frac{\widehat{\sigma}_{\lauc}}{\sqrt{N}} \right] \right) \to 1 - \alpha$$
where $z_{1-\alpha/2}$ is the standard normal critical value.
\end{proof}

\subsection{Test of homogeneous predicted risk}
To test whether predicted risk is flat in the low-performance subgroup $S$ compared to its complement $S^c$, we formulate the hypothesis test as:
\begin{equation}
H_0: \frac{\sigma_S^2}{\sigma_{S^c}^2} \ge \delta \quad \text{vs.} \quad H_A: \frac{\sigma_S^2}{\sigma_{S^c}^2} < \delta
\end{equation}
where $S$ is the low-performance subgroup, $S^c$ is its complement, and $\delta \in (0, 1)$ represents the non-flatness threshold ratio.

Since $S$ and $S^c$ do not share common observations, we can use a one-sided $F$-test to test $H_0$. Let $n_S$ and $n_{S^c}$ be the sample sizes of the respective regions, and define the sample variances of logit-transformed predicted risks as:
\begin{equation}
s_S^2 = \text{Var}\left(\text{logit}(\hat{p}_i) : i \in S\right), \quad s_{S^c}^2 = \text{Var}\left(\text{logit}(\hat{p}_i) : i \in S^c\right)
\end{equation}

Define the observed variance ratio:
\begin{equation}
\hat{R} = \frac{s_S^2}{s_{S^c}^2}
\end{equation}

At the boundary of the null hypothesis ($R = \frac{\sigma_S^2}{\sigma_{S^c}^2} = \delta$), the test statistic scaled by $\delta$ follows an $F$-distribution:
\begin{equation}
\frac{s_S^2}{\delta s_{S^c}^2} \sim F_{n_S-1, \, n_{S^c}-1}
\end{equation}

Because the alternative hypothesis posits a smaller variance in $S$, this is a lower-tail test. The $p$-value is given by:
\begin{equation}
p = P\left(F_{n_S-1, \, n_{S^c}-1} \le \frac{s_S^2}{\delta s_{S^c}^2}\right)
\end{equation}

We reject $H_0$ at significance level $\alpha$ when:
\begin{equation}
\frac{s_S^2}{\delta s_{S^c}^2} \le F_{\alpha; \, n_S-1, \, n_{S^c}-1}
\end{equation}

Equivalently, we can construct a $100(1-\alpha)\%$ one-sided upper confidence bound $R_U$ for the true variance ratio $R = \frac{\sigma_S^2}{\sigma_{S^c}^2}$:
\begin{equation}
R_U = \frac{\hat{R}}{F_{\alpha; \, n_S-1, \, n_{S^c}-1}}
\end{equation}
and reject $H_0$ if $R_U < \delta$.

\subsection*{Assumptions}
\begin{enumerate}[leftmargin=*]
    \item \textbf{Normality within each region:} $\text{logit}(p) \mid S \sim \mathcal{N}(\mu_S, \sigma_S^2)$ and $\text{logit}(p) \mid S^c \sim \mathcal{N}(\mu_{S^c}, \sigma_{S^c}^2)$.
    \item \textbf{Independence of observations:} Predictions must be independent across subjects. This holds under standard i.i.d. sampling, but clustered or longitudinal settings require block resampling adjustments.
    \item \textbf{Independence of the two variance estimates:} Because $S$ and $S^c$ are non-overlapping sets of independent observations, $s_S^2$ and $s_{S^c}^2$ are statistically independent. Comparing $S$ with its complement $S^c$ is therefore cleaner than comparing $S$ with the entire dataset.
    \item \textbf{Hold-out validation required for $S$:} The region $S$ must be fixed independently of the data used for the test. If an algorithm searches the same sample to discover $S$ and evaluate the $F$-test, selection bias invalidates the nominal $F$-distribution. This test must be conducted on an independent hold-out dataset.
\end{enumerate}


\subsection{Supplementary Figures and Tables}
\label{subsec:figures_tables}
Figure \ref{fig:attrition} shows the attrition statistics of the NHANES data 1999-2010. We initially identified $62,160$ respondents from the NHANES 1999–2010 cycles (A–F). We restricted the sample to adults aged 30–79 years (the age range for the PREVENT equations), leaving $23,920$ individuals. After excluding pregnant women ($n=466$) and individuals with self-reported prior cardiovascular disease ($n=2,272$), $21,182$ remained. We further excluded participants with missing data on any of the 11 PREVENT predictor variables (n=$3,151$), resulting in $18,031$ individuals with complete data. Of these, $259$ were excluded because the preventr scoring algorithm flagged out‑of‑range input values, leaving $17,772$ valid risk scores. Finally, we restricted to participants eligible for mortality linkage, yielding the final analytical cohort of $17,748$ individuals.

\begin{figure}[htbp]
\centering
\fbox{\begin{minipage}{0.2\textwidth}
\centering
\includegraphics[width=1\linewidth]{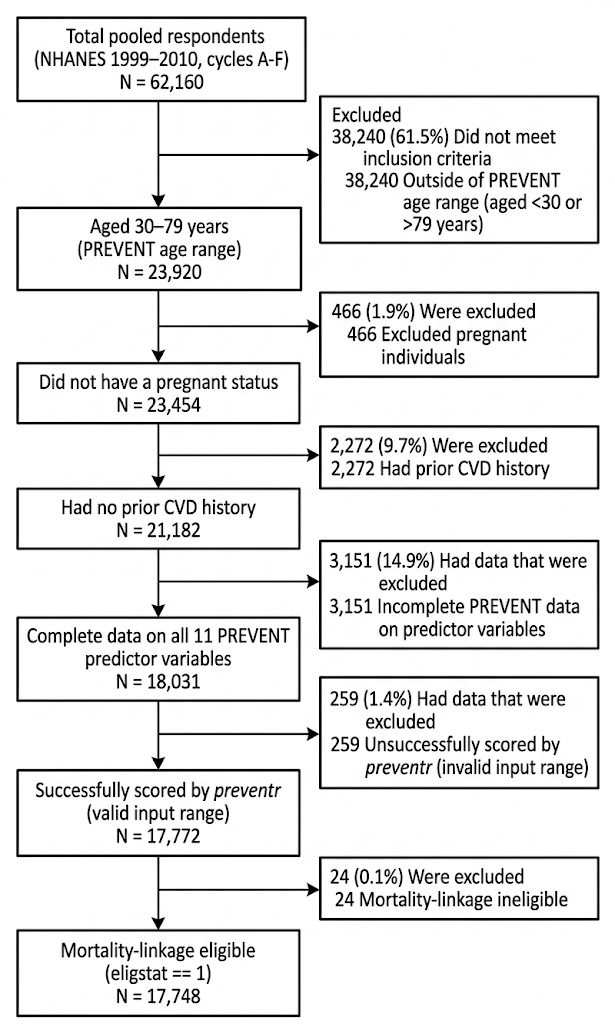}
\end{minipage}}
\caption{Attrition of NHANES data. Initially  there are $62,160$ respondents from the NHANES 1999–2010 cycles (A–F). We restricted the sample to adults aged 30–79 years (the age range for the PREVENT equations), leaving $23,920$ individuals. After excluding pregnant women ($n=466$) and individuals with self-reported prior cardiovascular disease ($n=2,272$), $21,182$ remained. We further excluded participants with missing data on any of the 11 PREVENT predictor variables ($n=3,151$), resulting in $18,031$ individuals with complete data. Of these, $259$ were excluded because the preventr scoring algorithm flagged out‑of‑range input values, leaving $17,772$ valid risk scores. Finally, we restricted to participants eligible for mortality linkage, yielding the final analytical cohort of $17,748$ individuals.}
\label{fig:attrition}
\end{figure}

A statistic of the NHANES data before prepocessing is shown in Table \ref{tab:table1} and \ref{tab:table2}.

UMAP analysis did not reveal obvious clusters in the data.
Before the implementation of our main method, we utilized a dimensionality projection method UMAP\cite{umap} to create more intuitive figures of the input data with respect to several baseline variables (age, sex, etc.). 

After preprocessing (filter out participants with more than $40\%$ variables missing and variables with more than $90\%$ missing data), the distribution of several baseline variables were presented via UMAP visualization method. Figure \ref{fig:umap-train} shows the distribution of age, sex (male=1, female=0), education (Less Than 9th Grade=1, 9-11th Grade (Includes 12th grade with no diploma)=2, High School Grad/GED or Equivalent=3, Some College or AA degree=4, College Graduate or above=5, Refused=7, Don't Know=9), race/ethnicity(Mexican American=1, Other Hispanic=2, Non-Hispanic White=3, Non-Hispanic Black=4, Other Race - Including Multi-Racial=5) and the ratio of poverty to income in training data set. Figure \ref{fig:umap-test1,fig:umap-test2,fig:umap-test3} shows the distribution of same set of variables in testing data set.

\begin{figure}[htbp]
    \centering
        \fbox{\begin{minipage}{0.3\linewidth}
        \centering
        \includegraphics[width=1\linewidth]{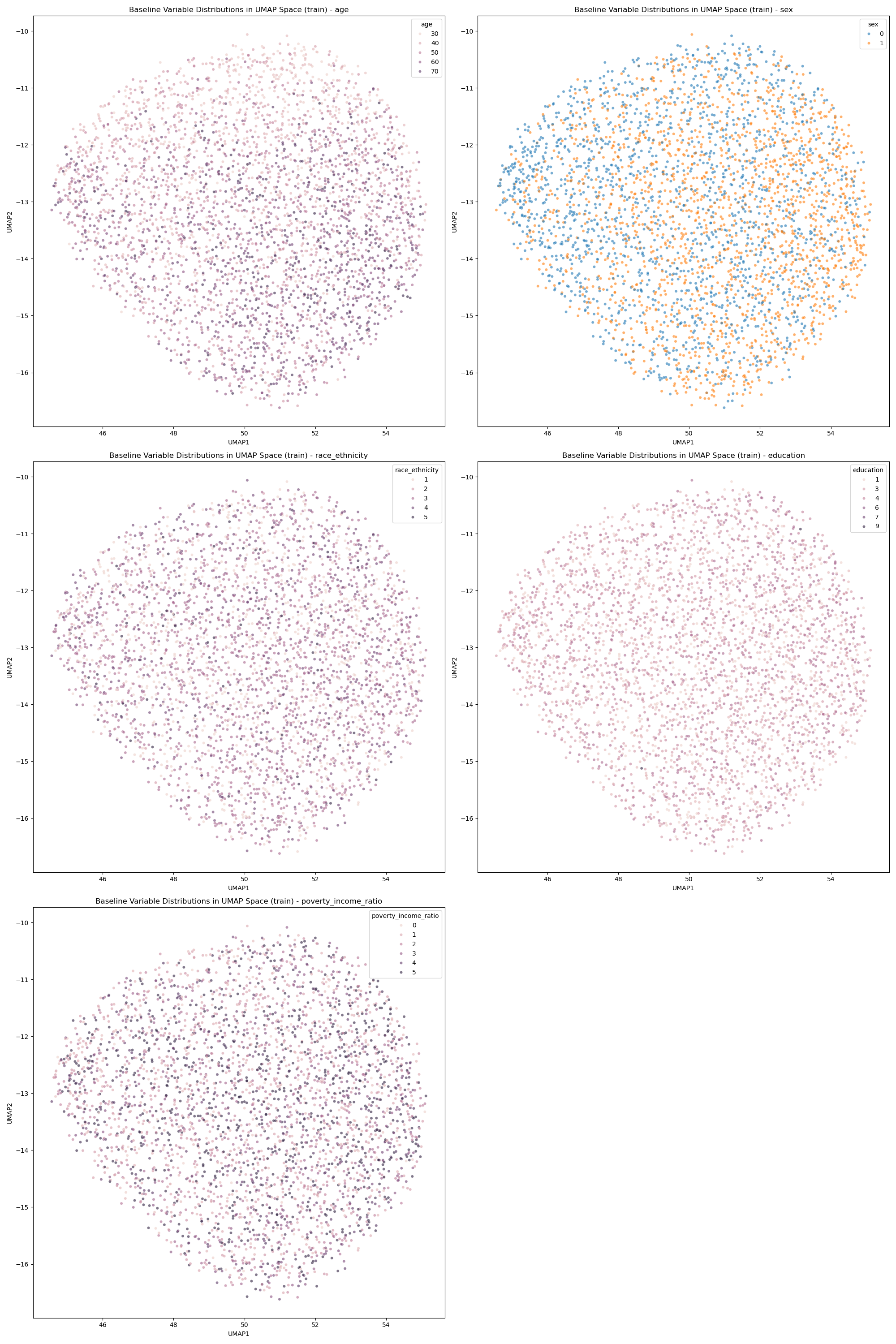}
        \end{minipage}}
    \caption{Absence of UMAP clusters in NHANES datset. UMAP visualization of the distribution of age, sex (male=1, female=0), education (Less Than 9th Grade=1, 9-11th Grade (Includes 12th grade with no diploma)=2, High School Grad/GED or Equivalent=3, Some College or AA degree=4, College Graduate or above=5, Refused=7, Don't Know=9), race/ethnicity(Mexican American=1, Other Hispanic=2, Non-Hispanic White=3, Non-Hispanic Black=4, Other Race - Including Multi-Racial=5) and the ratio of poverty to income in training data set. From the figure it can be seen that there isn't obvious cluster of subgroups.}
    \label{fig:umap-train}
\end{figure}

\begin{figure}
    \centering
    \fbox{\begin{minipage}{0.7\linewidth}
    \centering
    \includegraphics[width=1\linewidth]{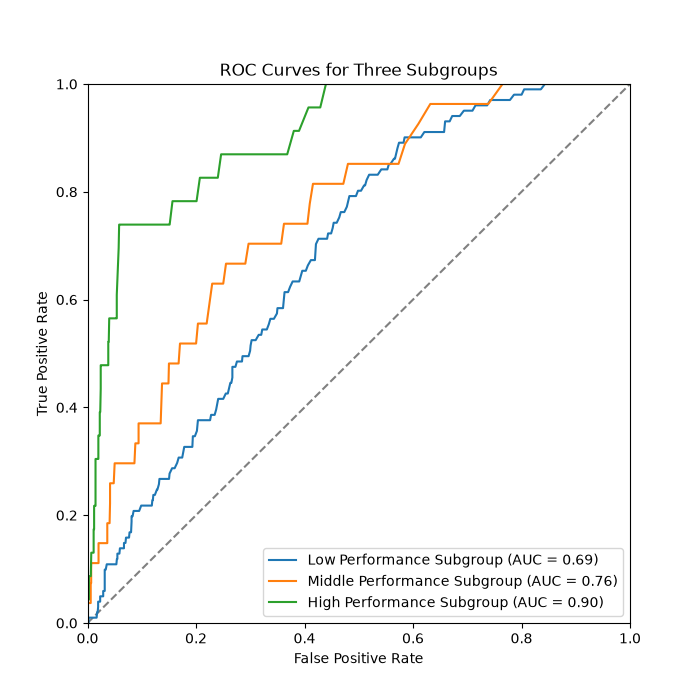}
    \end{minipage}}
    \caption{Three ROC curves of three models split using \(33\%\) and \(67\%\) percentiles of XGBoost predictions $\logit(\lauc(x_i)
    )$. Subgroup \(0\%-33\%\) corresponds to low-performance subgroup, Subgroup \(33\%-67\%\) corresponds to medium-performance subgroup, and Subgroup \(67\%-100\%\) corresponds to high-performance subgroup. The ROC curves of the three subgroups shows increasing performance when climbing from low- to high-performance subgroup. The AUC values shown in the legend also validate this. With confidence level of \(95\%\), the confidence band is shown in the figure and the confidence intervals of AUC values are shown in the legend.}
    \label{fig:roc(h-m-l)}
\end{figure}
\subsection{\textbf{Inclusion and Exclusion Criteria into External Validation Cohort.}
}
Participants aged 30 to 79 years at the time of baseline assessment (instance 0) in the UK Biobank were included, consistent with PREVENT criteria. Participants were excluded if pregnant or if at baseline have reported a history of cardiovascular disease (CVD), including coronary heart disease, myocardial infarction, or stroke. Participants with missing XGBoost input variables were excluded too. All necessary clinical, laboratory, and demographic predictors in the UK Biobank were harmonized to align with the coding conventions, measurement units, and truncation thresholds used in the NHANES training dataset. Harmonization decisions between NHANES variables and UK Biobank are summarized in Figure \ref{fig:mapping_table_NHANES_UKB}.

\begin{figure}
    \centering
    \fbox{\begin{minipage}{1\linewidth}
    \centering
    \includegraphics[width=1\linewidth]{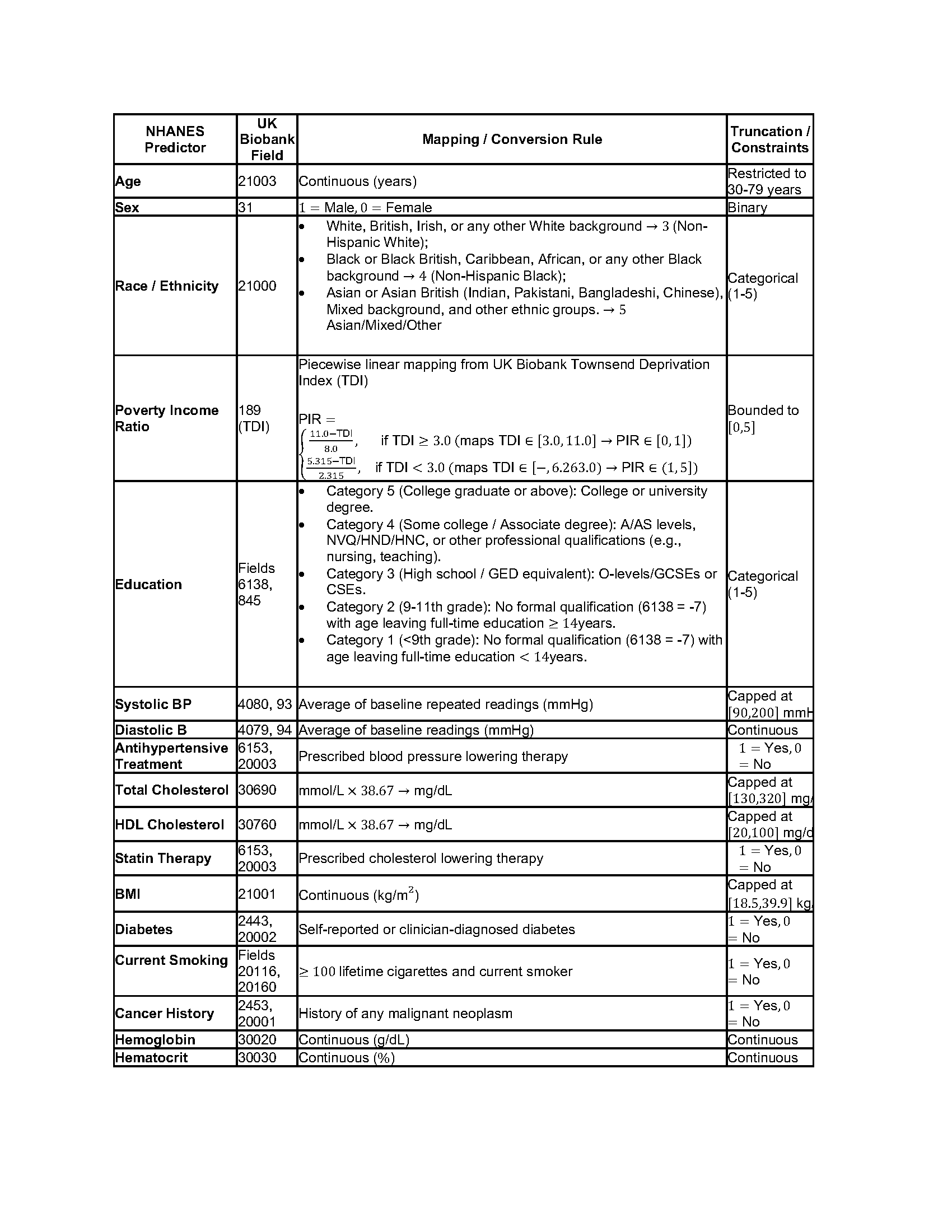}
    \end{minipage}}
    \caption{Harmonization summary of NHANES and UK Biobank variable definitions}
    \label{fig:mapping_table_NHANES_UKB}
        \end{figure}

\begin{figure}
    \centering
    \fbox{\begin{minipage}{1\linewidth}
    \centering
    \includegraphics[width=1\linewidth]{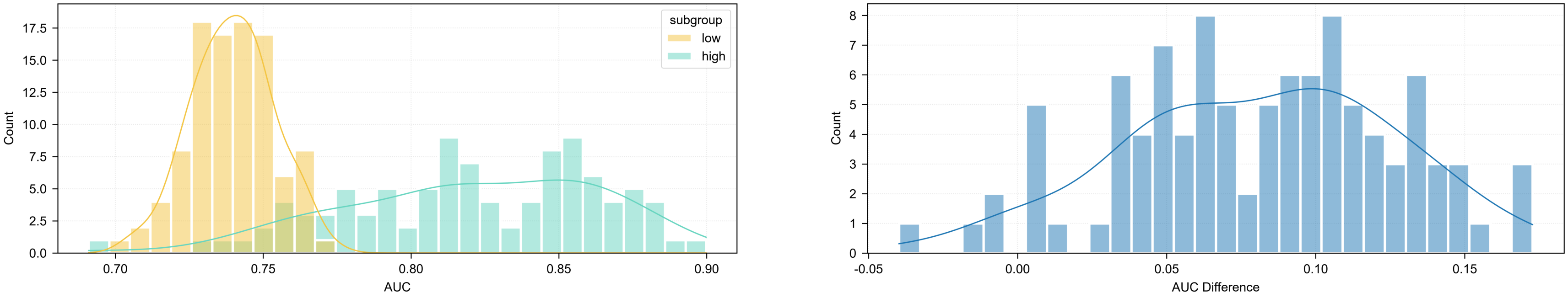}
    \end{minipage}}
    \caption{The histogram of standard AUC values of low- and high-performance subgroups in the test set of NHANES data with 100 seeds (0-99). The whole method is implemented: train the XGBoost model in the training set, then identify low- and high-performance subgroups in the test set. Then calculate the unweighted AUC in low- and high-performance subgroups respectively. Repeat the procedure for 100 seeds from 0 to 99 and plot the histogram using all AUC values. The figure shows that low-performance subgroup has lower AUC than high-performance subgroup. The mean AUC of low-performance subgroup is 0.744, with standard variation 0.0157. The mean AUC of high-performance subgroup is 0.822, with standard vatiation 0.0394.}
    \label{fig:auc_overlap}
\end{figure}

\subsection{Cardiovascular Endpoint Definition}

We validated our model using CVD death since this is the outcome which was used in the NHANES dataset to train LOCATOR model. 

\begin{figure}[htbp]
    \centering
    \fbox{\begin{minipage}{0.3\linewidth}
        \centering
    \includegraphics[width=1\linewidth]{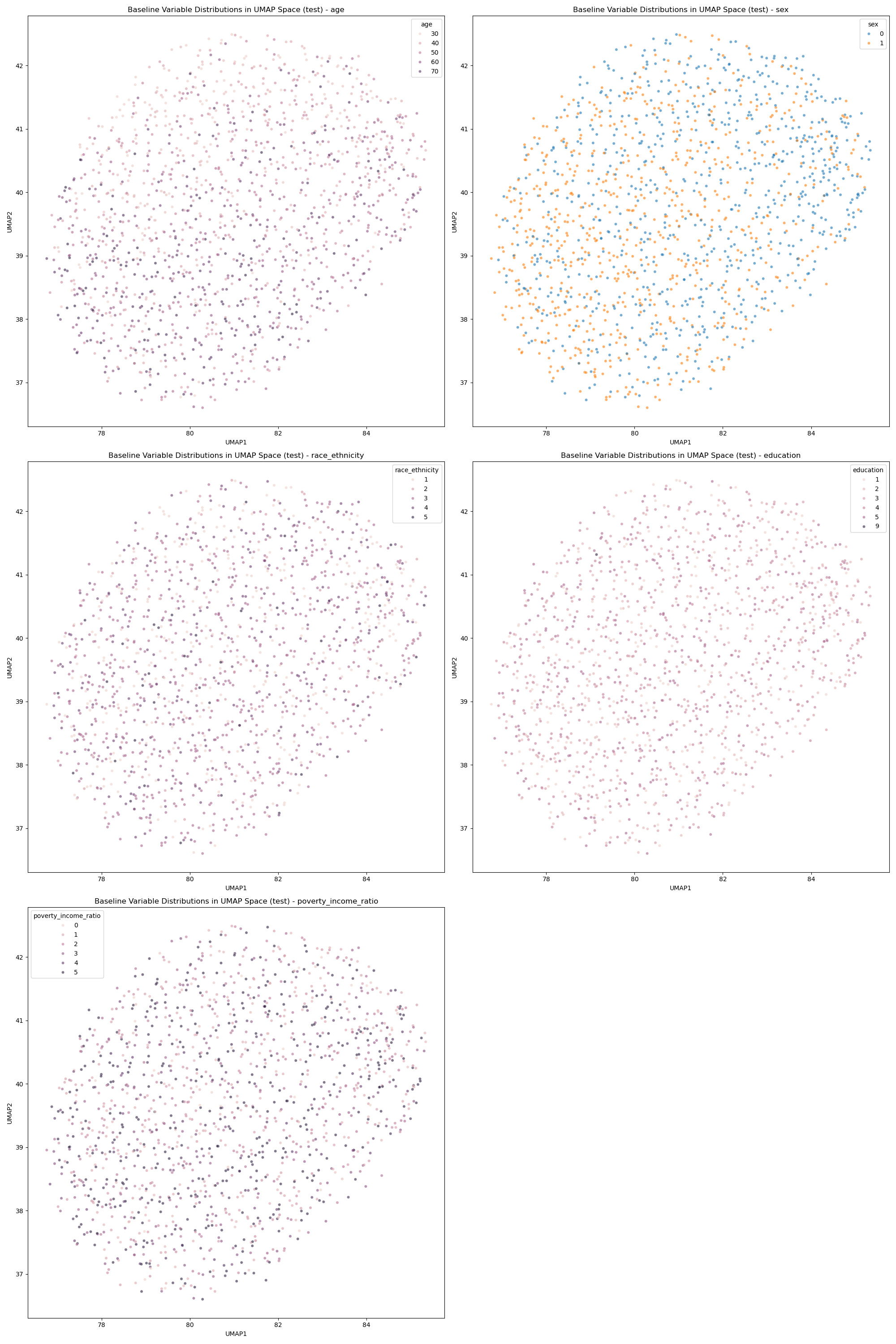}
    \end{minipage}}
    \caption{UMAP visualization of the distribution of age, sex (male=1, female=0), education (Less Than 9th Grade=1, 9-11th Grade (Includes 12th grade with no diploma)=2, High School Grad/GED or Equivalent=3, Some College or AA degree=4, College Graduate or above=5, Refused=7, Don't Know=9), race/ethnicity(Mexican American=1, Other Hispanic=2, Non-Hispanic White=3, Non-Hispanic Black=4, Other Race - Including Multi-Racial=5) and the ratio of poverty to income in \textbf{testing} data set.}
    \label{fig:umap-test1}
\end{figure}

\begin{figure}[htbp]
    \centering
    \fbox{\begin{minipage}{0.3\linewidth}
        \centering
    \includegraphics[width=1\linewidth]{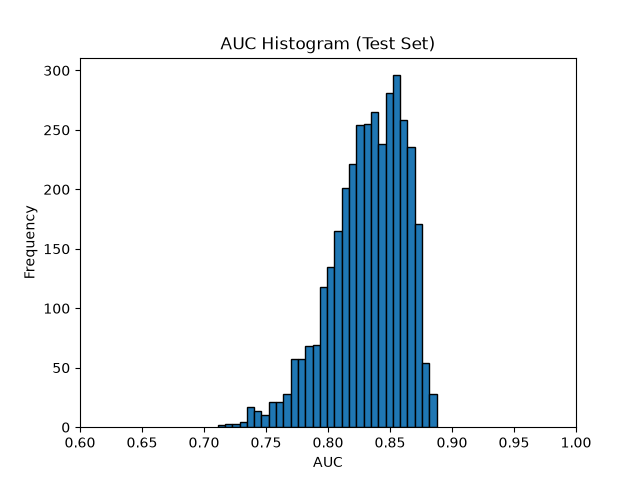}
    \end{minipage}}
    \caption{The histogram of the calculated weighted AUC in test set. The chosen seed is 64, corresponding to the maximum AUC-delta parsimonious model. 
Mean AUC: 0.8260, Median AUC: 0.8282}
    \label{fig:umap-test2}
\end{figure}

\begin{figure}[htbp]
    \centering
    \fbox{\begin{minipage}{0.3\linewidth}
        \centering
    \includegraphics[width=1\linewidth]{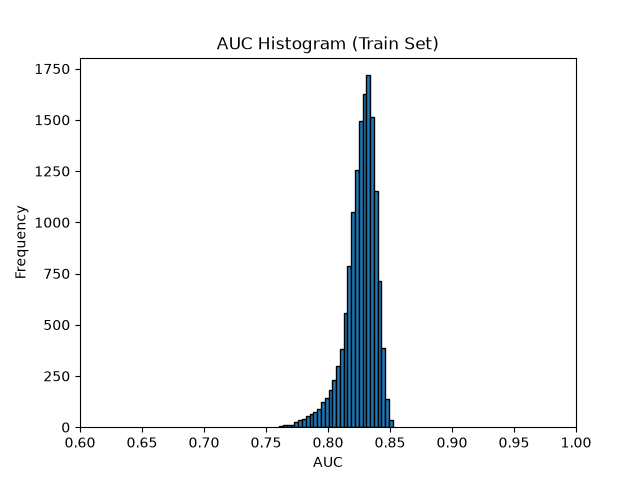}
    \end{minipage}}
    \caption{The histogram of the calculated weighted AUC in train set. The chosen seed is 64, corresponding to the maximum AUC-delta parsimonious model. 
Mean AUC: 0.8321, Median AUC: 0.8359}
    \label{fig:umap-test3}
\end{figure}

\begin{figure}[htbp]
    \centering
    \fbox{\begin{minipage}{0.5\linewidth}
        \centering
    \includegraphics[width=1\linewidth]{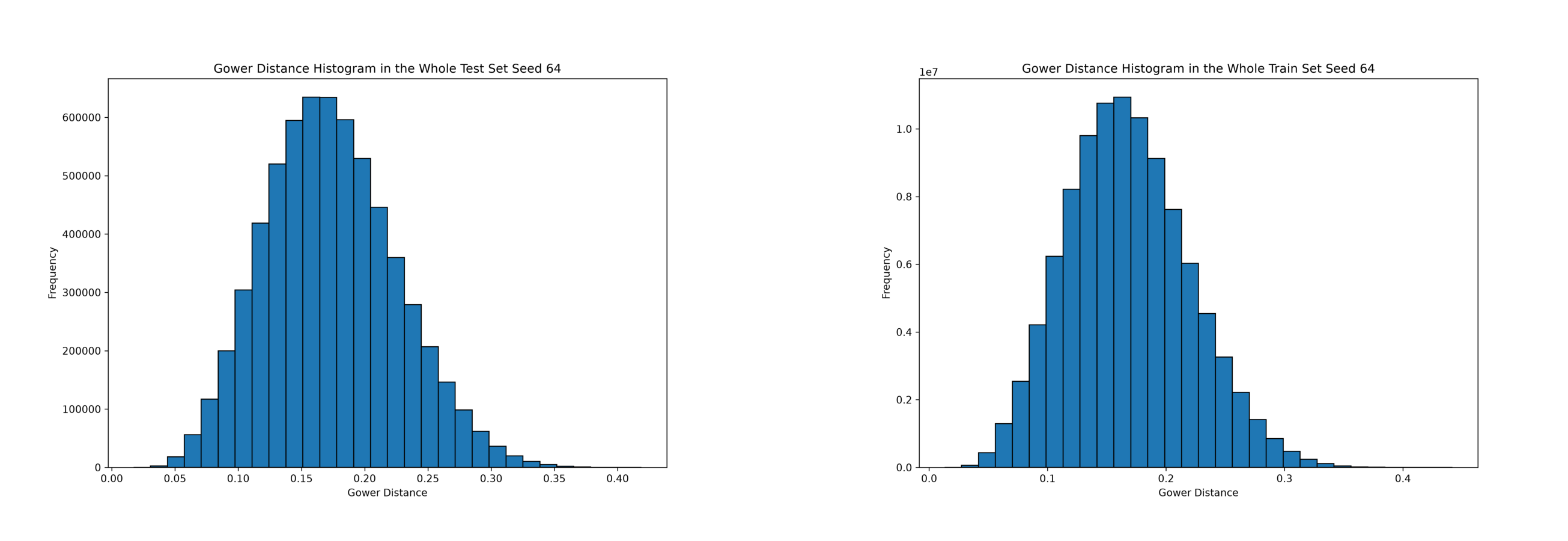}
    \end{minipage}}
    \caption{The histogram of the Gower distance in test set. The chosen seed is 64, corresponding to the maximum AUC-delta parsimonious model. The histograms show that the distribution of Gower distance is close to Poisson or Gaussian distribution, which helps show that both the training set and testing set are non-biased and are not having preference towards specific subgroups which share certain similarities.}
    \label{fig:umap-test1,fig:umap-test2,fig:umap-test3}
\end{figure}


\onecolumn
\begin{landscape}
\begin{longtable}{p{5.2cm}p{2.2cm}ccccc}
\caption{Baseline Demographic and Clinical Characteristics} \label{tab:table1} \\
\toprule
 &  & \multicolumn{5}{r}{Grouped by cvd\_death} \\
 &  & Missing & Overall & False & True & P-Value \\
\midrule
\endfirsthead
\caption[]{Baseline Demographic and Clinical Characteristics} \\
\toprule
 &  & \multicolumn{5}{r}{Grouped by cvd\_death} \\
 &  & Missing & Overall & False & True & P-Value \\
\midrule
\endhead
\midrule
\multicolumn{7}{r}{Continued on next page} \\
\midrule
\endfoot
\bottomrule
\endlastfoot
n &  &  & 17748 & 16992 & 756 &  \\
\addlinespace
Age (years), mean (SD) &  & 0 & 51.5 (13.5) & 50.9 (13.2) & 65.3 (11.0) & $<$0.001 \\
\addlinespace
\multirow[t]{2}{*}{Sex, n (\%)} & female &  & 8989 (50.6) & 8671 (51.0) & 318 (42.1) & $<$0.001 \\
 & male &  & 8759 (49.4) & 8321 (49.0) & 438 (57.9) &  \\
\addlinespace
\multirow[t]{5}{*}{Race / Ethnicity, n (\%)} & Mexican American &  & 3682 (20.7) & 3557 (20.9) & 125 (16.5) & $<$0.001 \\
 & Non-Hispanic Black &  & 3403 (19.2) & 3222 (19.0) & 181 (23.9) &  \\
 & Non-Hispanic White &  & 8719 (49.1) & 8317 (48.9) & 402 (53.2) &  \\
 & Other / Multi-Racial &  & 677 (3.8) & 665 (3.9) & 12 (1.6) &  \\
 & Other Hispanic &  & 1267 (7.1) & 1231 (7.2) & 36 (4.8) &  \\
\addlinespace
\multirow[t]{7}{*}{Education Level, n (\%)} & Less than 9th Grade &  & 2449 (13.8) & 2280 (13.4) & 169 (22.4) & $<$0.001 \\
 & 9th–11th Grade &  & 2737 (15.4) & 2581 (15.2) & 156 (20.6) &  \\
 & High School Grad / GED &  & 4142 (23.3) & 3952 (23.3) & 190 (25.1) &  \\
 & Some College / AA degree &  & 4601 (25.9) & 4462 (26.3) & 139 (18.4) &  \\
 & College Graduate or Above &  & 3798 (21.4) & 3698 (21.8) & 100 (13.2) &  \\
 & Refused &  & 8 (0.0) & 6 (0.0) & 2 (0.3) &  \\
 & Don't Know &  & 13 (0.1) & 13 (0.1) & 0 (0.0) &  \\
\addlinespace
Poverty-Income Ratio, mean (SD) &  & 1378 & 2.8 (1.6) & 2.8 (1.6) & 2.3 (1.5) & $<$0.001 \\
\addlinespace
Systolic BP (mmHg), mean (SD) &  & 0 & 124.7 (17.3) & 124.2 (17.0) & 135.3 (19.3) & $<$0.001 \\
\addlinespace
Diastolic BP (mmHg), mean (SD) &  & 0 & 72.2 (12.3) & 72.3 (12.1) & 69.1 (15.4) & $<$0.001 \\
\addlinespace
\multirow[t]{2}{*}{Hypertension Meds, n (\%)} & False &  & 13467 (75.9) & 13082 (77.0) & 385 (50.9) & $<$0.001 \\
 & True &  & 4281 (24.1) & 3910 (23.0) & 371 (49.1) &  \\
\addlinespace
Total Cholesterol (mg/dL), mean (SD) &  & 0 & 204.4 (39.1) & 204.4 (39.0) & 204.7 (41.0) & 0.846 \\
\addlinespace
HDL Cholesterol (mg/dL), mean (SD) &  & 0 & 52.6 (15.6) & 52.6 (15.6) & 51.9 (15.7) & 0.235 \\
\addlinespace
BMI (kg/m²), mean (SD) &  & 0 & 28.7 (5.5) & 28.7 (5.5) & 29.2 (5.7) & 0.035 \\
\addlinespace
Height (cm), mean (SD) &  & 0 & 167.8 (10.1) & 167.8 (10.1) & 167.4 (10.1) & 0.358 \\
\addlinespace
Waist Circumference (cm), mean (SD) &  & 252 & 99.0 (15.1) & 98.8 (15.0) & 103.3 (15.8) & $<$0.001 \\
\addlinespace
\multirow[t]{2}{*}{Diabetes Mellitus, n (\%)} & False &  & 15887 (89.5) & 15311 (90.1) & 576 (76.2) & $<$0.001 \\
 & True &  & 1861 (10.5) & 1681 (9.9) & 180 (23.8) &  \\
\addlinespace
\multirow[t]{2}{*}{Current Smoker, n (\%)} & False &  & 13777 (77.6) & 13210 (77.7) & 567 (75.0) & 0.084 \\
 & True &  & 3971 (22.4) & 3782 (22.3) & 189 (25.0) &  \\
\addlinespace
\multirow[t]{2}{*}{Statin Use, n (\%)} & False &  & 15470 (87.2) & 14895 (87.7) & 575 (76.1) & $<$0.001 \\
 & True &  & 2278 (12.8) & 2097 (12.3) & 181 (23.9) &  \\
\addlinespace
Serum Creatinine (mg/dL), mean (SD) &  & 0 & 0.9 (0.4) & 0.9 (0.4) & 1.0 (0.8) & $<$0.001 \\
\addlinespace
eGFR (mL/min/1.73 m²), mean (SD) &  & 0 & 94.6 (19.1) & 95.2 (18.8) & 81.4 (21.9) & $<$0.001 \\
\addlinespace
HbA1c (\%), median [Q1,Q3] &  & 35 & 5.5 [5.2,5.8] & 5.5 [5.2,5.8] & 5.7 [5.4,6.3] & $<$0.001 \\
\addlinespace
ALT (U/L), mean (SD) &  & 43 & 26.9 (29.1) & 27.0 (29.4) & 24.6 (21.3) & 0.003 \\
\addlinespace
AST (U/L), mean (SD) &  & 43 & 26.3 (22.1) & 26.3 (22.3) & 26.5 (18.4) & 0.842 \\
\addlinespace
Albumin (g/dL), mean (SD) &  & 0 & 42.6 (3.2) & 42.6 (3.2) & 41.9 (3.2) & $<$0.001 \\
\addlinespace
Hemoglobin (g/dL), mean (SD) &  & 32 & 14.3 (1.5) & 14.3 (1.5) & 14.2 (1.6) & 0.109 \\
\addlinespace
Hematocrit (\%), mean (SD) &  & 32 & 42.1 (4.3) & 42.1 (4.3) & 42.2 (4.5) & 0.797 \\
\addlinespace
WBC Count (1000 cells/µL), mean (SD) &  & 32 & 7.1 (2.4) & 7.1 (2.4) & 7.3 (2.2) & 0.043 \\
\addlinespace
Platelet Count (1000 cells/µL), mean (SD) &  & 32 & 264.0 (68.3) & 264.4 (68.0) & 254.2 (73.4) & $<$0.001 \\
\addlinespace
CRP (mg/L), median [Q1,Q3] &  & 6 & 2.1 [0.9,4.8] & 2.1 [0.9,4.7] & 3.0 [1.4,6.5] & $<$0.001 \\
\addlinespace
\multirow[t]{2}{*}{History of Cancer, n (\%)} & False &  & 16270 (91.7) & 15601 (91.8) & 669 (88.5) & 0.002 \\
 & True &  & 1478 (8.3) & 1391 (8.2) & 87 (11.5) &  \\
\addlinespace
\end{longtable}
\end{landscape}

\begin{landscape}
\begin{longtable}[l]{@{} p{5.5cm} p{2.2cm} c c >{\centering\arraybackslash}p{4cm} >{\centering\arraybackslash}p{4cm} c @{}}
\caption{Baseline Demographic and Clinical Characteristics} \label{tab:table2} \\
\toprule
 &  & \multicolumn{5}{r}{Grouped by Outcome} \\
 &  & Missing & Overall & High-Performance Subgroup & Low-Performance Subgroup & P-Value \\
\midrule
\endfirsthead
\caption[]{Baseline Demographic and Clinical Characteristics} \\
\toprule
 &  & \multicolumn{5}{r}{Grouped by Outcome} \\
 &  & Missing & Overall & High-Performance Subgroup & Low-Performance Subgroup & P-Value \\
\midrule
\endhead
\midrule
\multicolumn{7}{r}{Continued on next page} \\
\midrule
\endfoot
\bottomrule
\endlastfoot
n &  &  & 3550 & 1775 & 1775 &  \\
\addlinespace
\multirow[t]{2}{*}{Hypertension Meds, n (\%)} & False &  & 2698 (76.0) & 1608 (90.6) & 1090 (61.4) & $<$0.001 \\
 & True &  & 852 (24.0) & 167 (9.4) & 685 (38.6) &  \\
\addlinespace
\multirow[t]{2}{*}{Statin Use, n (\%)} & False &  & 3099 (87.3) & 1692 (95.3) & 1407 (79.3) & $<$0.001 \\
 & True &  & 451 (12.7) & 83 (4.7) & 368 (20.7) &  \\
\addlinespace
\multirow[t]{2}{*}{Current Smoker, n (\%)} & False &  & 2786 (78.5) & 1600 (90.1) & 1186 (66.8) & $<$0.001 \\
 & True &  & 764 (21.5) & 175 (9.9) & 589 (33.2) &  \\
\addlinespace
\multirow[t]{2}{*}{Diabetes Mellitus, n (\%)} & False &  & 3162 (89.1) & 1710 (96.3) & 1452 (81.8) & $<$0.001 \\
 & True &  & 388 (10.9) & 65 (3.7) & 323 (18.2) &  \\
\addlinespace
Age (years), mean (SD) &  & 0 & 51.4 (13.5) & 46.3 (12.2) & 56.5 (12.9) & $<$0.001 \\
\addlinespace
\multirow[t]{2}{*}{History of Cancer, n (\%)} & False &  & 3291 (92.7) & 1751 (98.6) & 1540 (86.8) & $<$0.001 \\
 & True &  & 259 (7.3) & 24 (1.4) & 235 (13.2) &  \\
\addlinespace
\multirow[t]{6}{*}{Education Level, n (\%)} & Less than 9th Grade &  & 482 (13.6) & 153 (8.6) & 329 (18.5) & $<$0.001 \\
 & 9th–11th Grade &  & 563 (15.9) & 180 (10.1) & 383 (21.6) &  \\
 & High School Grad / GED &  & 831 (23.4) & 373 (21.0) & 458 (25.8) &  \\
 & Some College / AA degree &  & 887 (25.0) & 496 (27.9) & 391 (22.0) &  \\
 & College Graduate or Above &  & 784 (22.1) & 570 (32.1) & 214 (12.1) &  \\
 & Don't Know &  & 3 (0.1) & 3 (0.2) & 0 (0.0) &  \\
\addlinespace
\multirow[t]{2}{*}{Sex, n (\%)} & Female &  & 1820 (51.3) & 936 (52.7) & 884 (49.8) & 0.087 \\
 & Male &  & 1730 (48.7) & 839 (47.3) & 891 (50.2) &  \\
\addlinespace
Poverty-Income Ratio, mean (SD) &  & 0 & 2.7 (1.6) & 3.3 (1.6) & 2.2 (1.5) & $<$0.001 \\
\addlinespace
Systolic BP (mmHg), mean (SD) &  & 0 & 124.5 (17.6) & 119.3 (15.7) & 129.8 (17.9) & $<$0.001 \\
\addlinespace
\multirow[t]{5}{*}{Race / Ethnicity, n (\%)} & Mexican American &  & 718 (20.2) & 344 (19.4) & 374 (21.1) & $<$0.001 \\
 & Non-Hispanic Black &  & 722 (20.3) & 240 (13.5) & 482 (27.2) &  \\
 & Non-Hispanic White &  & 1734 (48.8) & 997 (56.2) & 737 (41.5) &  \\
 & Other / Multi-Racial &  & 128 (3.6) & 41 (2.3) & 87 (4.9) &  \\
 & Other Hispanic &  & 248 (7.0) & 153 (8.6) & 95 (5.4) &  \\
\addlinespace
Diastolic BP (mmHg), mean (SD) &  & 0 & 72.0 (11.9) & 71.4 (10.4) & 72.6 (13.2) & 0.003 \\
\addlinespace
HDL Cholesterol (mg/dL), mean (SD) &  & 0 & 52.8 (15.6) & 52.6 (14.4) & 53.1 (16.7) & 0.382 \\
\addlinespace
Total Cholesterol (mg/dL), mean (SD) &  & 0 & 205.1 (39.5) & 205.4 (38.5) & 204.7 (40.4) & 0.592 \\
\addlinespace
Hemoglobin (g/dL), mean (SD) &  & 0 & 14.3 (1.5) & 14.3 (1.4) & 14.3 (1.6) & 0.909 \\
\addlinespace
Hematocrit (\%), mean (SD) &  & 0 & 42.1 (4.3) & 42.0 (4.1) & 42.1 (4.5) & 0.577 \\
\addlinespace
\multirow[t]{2}{*}{cvd\_death, n (\%)} & False &  & 3399 (95.7) & 1743 (98.2) & 1656 (93.3) & $<$0.001 \\
 & True &  & 151 (4.3) & 32 (1.8) & 119 (6.7) &  \\
\addlinespace
weighted AUC, median [Q1,Q3] &  & 0 & 0.8 [0.7,0.9] & 0.9 [0.9,0.9] & 0.7 [0.7,0.8] & $<$0.001 \\
\addlinespace
\end{longtable}
\end{landscape}

\printbibliography
\end{document}